\newcommand{\cX}{{\cal X}}
\newcommand{\cS}{{\cal S}}
\newcommand{\cM}{{\cal M}}
\newcommand{\cP}{{\cal P}}
\newcommand{\cI}{{\cal I}}
\newcommand{\Min}{\mathrm{Min}}
\newcommand{\size}{\mathrm{size}}
\newcommand{\wt}{\mathrm{wt}}
\newcommand{\al}{\allowbreak}
\newcommand{\mn}[1]{}
\begin{document}

\title{On Exact Learning of $d$-Monotone Functions}
\titlerunning{Learning $d$-Monotone Functions}

\author{Nader H. Bshouty}
\authorrunning{N. H. Bshouty} 
%
\tocauthor{N. H. Bshouty}
\institute{Technion}

\maketitle              

\begin{abstract}
In this paper, we study the learnability of the Boolean class of $d$-monotone functions $f:\cX\to\{0,1\}$ from membership and equivalence queries, where $(\cX,\le)$ is a finite lattice. 
We show that the class of $d$-monotone functions that are represented in the form $f=F(g_1,g_2,\ldots,g_d)$, where $F$ is any Boolean function $F:\{0,1\}^d\to\{0,1\}$ and $g_1,\ldots,g_d:\cX\to \{0,1\}$ are any monotone functions, is learnable in time $\sigma(\cX)\cdot (\size(f)/d+1)^{d}$ where $\sigma(\cX)$ is the maximum sum of the number of immediate predecessors in a chain from the largest element to the smallest element in the lattice $\cX$ and  $\size(f)=\size(g_1)+\cdots+\size(g_d)$, where $\size(g_i)$ is the number of minimal elements in $g_i^{-1}(1)$. 

For the Boolean function $f:\{0,1\}^n\to\{0,1\}$, the class of $d$-monotone functions that are represented in the form $f=F(g_1,g_2,\ldots,g_d)$, where $F$ is any Boolean function and $g_1,\ldots,g_d$ are any monotone DNF, is learnable in time $O(n^2)\cdot (\size(f)/d+1)^{d}$ where $\size(f)=\size(g_1)+\cdots+\size(g_d)$. 

In particular, this class is learnable in polynomial time when $d$ is constant. Additionally, this class is learnable in polynomial time when $\size(g_i)$ is constant for all $i$ and $d=O(\log n)$. 
\keywords{Exact learning, Membership queries, Equivalence queries, $d$-monotone function.}
\end{abstract}
\section{Introduction}
Let $\cP=(\cX,\le)$ be a lattice. A Boolean function $f:\cX\to \{0,1\}$ is \textit{$d$-monotone} if, for any chain $x_1<x_2<\cdots<x_t$ in $\cX$, the sequence $0,f(x_1),f(x_2),\ldots,f(x_t)$ changes its value at most $d$ times. If $d=1$, we say that $f$ is a \textit{monotone} function.

In this paper, we study the learnability of $d$-monotone functions. The first fact that motivates the study of this class is that every Boolean function is $d$-monotone for some $d\le n$. 
The second is Markov's result~\cite{Markov58}, which states: The minimum number of negation gates in an AND-OR-NOT
circuit that computes $f$ is $\log d+O(1)$ if and only if $f$ is an $O(d)$-monotone function. Therefore, learning
$d$-monotone functions can be seen as similar to learning functions with few negations~\cite{BlaisCOST15}.

When $\cX=\{0,1\}^n$, the problem of learning monotone and $d$-monotone Boolean functions has been extensively studied in the 
literature. See~\cite{AmanoM06,Angluin87,Black,BlaisCOST15,BlumBL98,Bshouty97,BshoutyT96,GoldreichGLRS00,HarmsY22} \cite{LangeRV22,LangeV23,ODonnellS07,ODonnellW09,Servedio04,TakimotoSM00}.

In the PAC learning without membership queries under the uniform distribution, Bshouty-Tamon \cite{BshoutyT96} and Lange et al.~\cite{LangeRV22,LangeV23} proved that monotone functions can be learned in time $\exp{(\sqrt{n}/\epsilon)}$. 
Blais et al.~\cite{BlaisCOST15} extended the result to $d$-monotone functions. They provided an algorithm that runs in time $\exp{(d\sqrt{n}/\epsilon)}$ and showed that this algorithm is optimal. See also~\cite{Black}.

In the exact learning with membership and equivalence queries, Angluin~\cite{Angluin87} proved that any monotone DNF $f$ can be learned in polynomial time (poly$(n,\al\size(f))$) with $\size(f)$ equivalence queries and $n\cdot\size(f)$ membership queries, where $\size(f)$ is the number of monotone terms (minterms) in $f$. 
One possible representation of $d$-monotone function introduced by Blais et al.~\cite{BlaisCOST15} uses the fact that every $d$-monotone function can be expressed as $g_1\oplus g_2\oplus \cdots\oplus g_d$, where each $g_i$ is a monotone DNF, and $\oplus$ denotes the exclusive OR (XOR) operation.
Takimoto et al.~\cite{TakimotoSM00} show that if $g_d\Rightarrow g_{d-1}\Rightarrow \cdots\Rightarrow g_1$ and for every $i\le d-1$, there is no term that appears in both\footnote{Takimoto et al. claim that their result applies for any $g_i$ that satisfies $g_d\Rightarrow g_{d-1}\Rightarrow \cdots\Rightarrow g_1$ and for every $i\le d-1$, $g_i\not=g_{i+1}$. 
In this paper, we show that this claim is not entirely accurate. For their algorithm to be valid, it is necessary that for every $i\le d-1$, no term appears in both $g_i$ and $g_{i+1}$. See also~\cite{GuijarroLR01} page~560.} $g_i$ and $g_{i+1}$, then $f$ is learnable from at most $n\prod_i\size(g_i)\le n(\size(f)/d+1)^d$ equivalence queries and  $n^3\prod_i\size(g_i)\le n^3(\size(f)/d+1)^d$ membership queries, where $\size(f)=\size(g_1)+\cdots+\size(g_d)$.

This paper studies the learnability of the $d$-monotone function in a very general representation. We study the class of $d$-monotone functions represented in the form $F(g_1,g_2,\al\ldots,g_d)$ where $F$ is {\it any} Boolean function $F:\{0,1\}^d\to \{0,1\}$ and each $g_i$ is {\it any} monotone DNF. 

We first state the result in the general setting when $g_i:\cX\to\{0,1\}$ where $\cX$ is any lattice.

\begin{theorem}\label{THEO1}
    Let $(\cX,\le)$ be a finite lattice. 
    The class of $d$-monotone functions $f:\cX\to\{0,1\}$, that are represented in the form $f=F(g_1,g_2,\ldots,g_d)$, where $F$ is any Boolean function $F:\{0,1\}^d\to\{0,1\}$ and $g_1,\ldots,g_d:\cX\to \{0,1\}$ are any monotone functions, is learnable in time $\sigma(\cX)\cdot (\size(f)/d+1)^{d}$ where $\sigma(\cX)$ is the maximum sum of the number of immediate predecessors in a chain from the largest element to the smallest element in the lattice $\cX$ and  $\size(f)=\size(g_1)+\cdots+\size(g_d)$, where $\size(g_i)$ is the number of minimal elements in $g_i^{-1}(1)$. 

    The algorithm asks at most $(\size(f)/d+1)^{d}$ equivalence queries and $\sigma(\cX)\cdot (\size(f)/d+1)^{d}$ membership queries.
\end{theorem}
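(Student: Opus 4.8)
The plan is to run an Angluin-style loop of equivalence queries, maintaining a growing set $\hat M\subseteq\cX$ of ``discovered minterms,'' and to extract one new minterm from each counterexample by a downward walk that uses only membership queries to $f$. The engine is the observation that $f(x)=F(\phi(x))$ depends on $x$ only through $\phi(x):=(g_1(x),\dots,g_d(x))$, together with the following \emph{reduction lemma}: for every $x$, if $y=\bigvee\{m : m\text{ is a minterm of some }g_i,\ m\le x\}$, then $\phi(y)=\phi(x)$, hence $f(y)=f(x)$. Indeed, $y\le x$ forces $g_j(y)\le g_j(x)$, so every coordinate that is $0$ at $x$ stays $0$ at $y$; and whenever $g_i(x)=1$ some minterm of $g_i$ lies below $x$ and is included in the join, so that coordinate is $1$ at $y$ as well. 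This suggests the hypothesis $h(x)=f\!\left(\bigvee\{m\in\hat M : m\le x\}\right)$ (with the empty join read as the bottom element), evaluated through a single membership query; by the lemma, $h\equiv f$ as soon as $\hat M$ generates every $g_i$.

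Given a counterexample $x$ to $h$, write $y=\bigvee\{m\in\hat M:m\le x\}\le x$. Since $f(y)\ne f(x)$ we have $\phi(y)\ne\phi(x)$, so some coordinate $i$ satisfies $g_i(x)=1$ but $g_i(y)=0$; in particular $g_i$ has an undiscovered minterm below $x$. To produce one, I would walk downward inside the set $P=\{z\le x : z\not\le y\text{ and }f(z\vee y)\ne f(y)\}$, which contains $x$, by repeatedly moving from the current point to an immediate predecessor that still lies in $P$, stopping at a minimal element $z$ of $P$. A short monotonicity argument shows such a step always exists until a minimal element is reached, and that the membership queries spent (one per immediate predecessor inspected) sum, along the traversed chain, to at most $\sigma(\cX)$.

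The heart of the proof is to show that a minimal $z\in P$ is a genuine minterm of some $g_i$ with $g_i(y)=0$, and is therefore new: it cannot already lie in $\hat M$, since every discovered minterm below $x$ is $\le y$ while $z\not\le y$. From $f(z\vee y)\ne f(y)$ one extracts a coordinate $i$ with $g_i(z)=1$ and $g_i(y)=0$; if $z$ were not minimal in $g_i^{-1}(1)$, an immediate predecessor $z'$ with $g_i(z')=1$ would again lie in $P$, contradicting minimality of $z$. Thus each counterexample enlarges $\hat M$ by a distinct real minterm, so the number of equivalence queries is at most the total number of minterms $\size(f)=\sum_i\size(g_i)$, which is $\le(\size(f)/d+1)^d$; each is paid for by one walk-down of cost at most $\sigma(\cX)$, giving the claimed membership-query and running-time bounds.

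The main obstacle is precisely this walk-down correctness: because $F$ is arbitrary and unknown and we never observe the individual $g_i$, we must certify a single-coordinate minterm using only the scalar values of $f$, which is what the choice of $P$ (comparing $f(z\vee y)$ against $f(y)$) is engineered to do. A secondary point, should the model require the hypothesis to be represented without calling $f$, is to instead store $f$ on the generalized terms obtained by joining at most one discovered minterm per coordinate; there are at most $\prod_i(\size(g_i)+1)\le(\size(f)/d+1)^d$ of these by AM--GM, which is where the stated form of the bound comes from.
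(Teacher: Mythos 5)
There is a genuine gap at the heart of your walk-down argument, namely the claim that a minimal element $z$ of $P=\{z\le x:\ z\not\le y\ \text{and}\ f(z\vee y)\ne f(y)\}$ is a minterm of some $g_i$. From $f(z\vee y)\ne f(y)$ you can only extract a coordinate $i$ with $g_i(z\vee y)=1$ and $g_i(y)=0$; since $z\vee y\ge z$, this does \emph{not} give $g_i(z)=1$, because the minimal element of $g_i$ witnessing $g_i(z\vee y)=1$ may straddle $z$ and $y$. Concretely, take $\cX=\{0,1\}^3$, $g_1=M_{100}$, $g_2=M_{110}$, $F(a,b)=a\oplus b$, so $f=x_1\wedge\overline{x_2}$ (a $2$-monotone function). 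After the first counterexample you hold $\hat M=\{100\}$ and $h=M_{100}$; the counterexample $x=110$ gives $y=100$ and $P=\{110,010\}$, whose minimal element is $z=010$. But $010$ is a minimal element of neither $g_1$ nor $g_2$ (indeed $g_1(010)=g_2(010)=0$); the minterm $110$ of $g_2$ that makes $f(z\vee y)\ne f(y)$ uses one bit from $z$ and one from $y$. So the invariant ``each counterexample adds a genuine minterm to $\hat M$'' fails, and with it your bound of $\size(f)$ equivalence queries (which would in any case be far stronger than the theorem, and should have been a warning sign). A second hole in the same step: even when an immediate predecessor $z'$ of $z$ satisfies $g_i(z')=1$, you cannot conclude $z'\in P$, because $F$ is arbitrary and may satisfy $F(\phi(z'\vee y))=F(\phi(y))$ even though $\phi(z'\vee y)\ne\phi(y)$.

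Note also that in the example above the discovered point $010$ does not even lie in $\bigcup_{I\subseteq[d]}\bigvee_{i\in I}\Min(g_i)=\{100,110\}$, so it is unclear that your procedure confines its discovered points to any set of controlled size. This is exactly the difficulty the paper's proof is built around: it walks down on $f\oplus h$ for a $d$-monotone hypothesis $h$ produced by the \textbf{Consistent} procedure, and proves (Lemmas~\ref{BaseF}, \ref{Bin} and~\ref{FTAlg}) that every local minimal element of $f\oplus h$ lies in $\bigcup_{I\subseteq[d]}\bigvee_{i\in I}\Min(g_i)$, a set of size at most $\prod_{i=1}^d(\size(g_i)+1)-1\le(\size(f)/d+1)^d$; that containment is where the stated query bounds come from. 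To repair your approach you would need both a different notion of progress per counterexample and a hypothesis the learner can write down without calling the membership oracle at evaluation time: your proposed fix of tabulating $f$ on joins of ``one discovered minterm per coordinate'' presupposes knowing which discovered element belongs to which $g_i$, information the learner does not have.
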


For the lattice $\{0,1\}^n$ with the standard $\le$, we have $\sigma(\{0,1\}^n)=n(n+1)/2=O(n^2)$ and therefore,
\begin{corollary}\label{Coroll}
    The class of $d$-monotone functions $f:\{0,1\}^n\to \{0,1\}$ that are represented in the form $f=F(g_1,g_2,\ldots,g_d)$, where $F$ is any Boolean function and $g_1,\ldots,g_d$ are any monotone DNF, is learnable in time $O(n^2)\cdot (\size(f)/d+1)^{d}$, where $\size(f)=\size(g_1)+\cdots+\size(g_d)$.  

    The algorithm asks at most $(\size(f)/d+1)^{d}$ equivalence queries and $n^2\cdot (\size(f)/d+1)^{d}$ membership queries.
\end{corollary}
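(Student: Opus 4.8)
The plan is to derive the corollary as a direct specialization of Theorem~\ref{THEO1} to the Boolean lattice $\cX=\{0,1\}^n$ under the coordinatewise order, where $\vec 1=(1,\dots,1)$ is the largest and $\vec 0$ the smallest element. Thus essentially all of the work is already carried out by Theorem~\ref{THEO1}, and only two things must be checked: that the stated class is exactly the class of Theorem~\ref{THEO1} instantiated at $\{0,1\}^n$, and that the lattice parameter evaluates to $\sigma(\{0,1\}^n)=n(n+1)/2=O(n^2)$. The learning algorithm, together with its equivalence- and membership-query guarantees, is then inherited verbatim from Theorem~\ref{THEO1}.

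For the first point I would observe that over $\{0,1\}^n$ a function $g$ is computed by a monotone DNF (a disjunction of terms containing no negated variables) if and only if it is monotone as a lattice function, and that in this case the prime implicants of $g$ are exactly the minimal elements of $g^{-1}(1)$. Hence the number of terms of the reduced monotone DNF for $g_i$ equals $\size(g_i)$ as defined in Theorem~\ref{THEO1}, so $\size(f)=\size(g_1)+\cdots+\size(g_d)$ carries the same meaning in both statements, and the family $f=F(g_1,\dots,g_d)$ with arbitrary $F$ and monotone-DNF $g_i$ coincides with the Theorem~\ref{THEO1} class over $\{0,1\}^n$.

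For the second point I would compute $\sigma$ directly. Any chain from $\vec 1$ to $\vec 0$ has strictly decreasing Hamming weight, so it contains at most one element of each weight $0,1,\dots,n$; an element of weight $w$ has exactly $w$ immediate predecessors, obtained by turning one of its $w$ ones into a zero. Since all summands are nonnegative, the sum of immediate-predecessor counts is maximized by the saturated chain that hits every weight, giving $\sum_{w=0}^{n} w=n(n+1)/2$. Substituting $\sigma(\{0,1\}^n)=n(n+1)/2$ into Theorem~\ref{THEO1} immediately yields running time $O(n^2)\cdot(\size(f)/d+1)^d$, at most $(\size(f)/d+1)^d$ equivalence queries, and at most $(n(n+1)/2)\cdot(\size(f)/d+1)^d\le n^2\cdot(\size(f)/d+1)^d$ membership queries, which are exactly the claimed bounds. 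I do not anticipate a genuine obstacle: the only point needing a line of care is that the maximum defining $\sigma$ is taken over top-to-bottom chains, and the weight argument above shows this maximum is attained by any saturated chain, so the corollary reduces to this one clean numerical evaluation.
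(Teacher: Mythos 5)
Your proposal is correct and matches the paper's own (very brief) derivation: the corollary is obtained by instantiating Theorem~\ref{THEO1} at the lattice $(\{0,1\}^n,\le)$ and computing $\sigma(\{0,1\}^n)=n(n+1)/2=O(n^2)$, exactly as you do. Your additional checks (that monotone DNF over $\{0,1\}^n$ coincide with monotone lattice functions with matching size parameters, and that every saturated chain realizes the sum $\sum_{w=0}^{n}w$) are correct and merely make explicit what the paper leaves implicit.
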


In particular, the following classes are learnable in polynomial time ($poly(\al\size(f),n)$):
\begin{enumerate}
    \item The class of $d$-monotone functions where $d$ is constant.
    \item The class of $O(\log n)$-monotone functions of size $\size(f)=O(\log n)$.
\end{enumerate}

To compare our result with Takimoto et al. \cite{TakimotoSM00}, we prove that there is a function $f$ that can be represented as $f=F(g_1,\ldots,g_d)$ and has size $s$, where the representation $f=G_1\oplus G_2\oplus \cdots \oplus G_d$ of Takimoto et al. is of size at least $O(s^d)$. 
This, by their analysis, implies that for $f$, their algorithm asks $O(ns^{d^2})$ equivalence queries and $O(n^3s^{d^2})$ membership queries, while our algorithm asks at most $O(s^d)$ equivalence queries and $O(n^2s^{d})$ membership queries.

\section{Definitions and Preliminary Results}
Let $\cX$ be a finite set. Let $\cP=(\cX,\le)$ \mn{$\cP=(\cX,\le)$} be a lattice. We say that $b$ is an \textit{immediate predecessor} \mn{immediate predecessor} of $a$ if $b<a$ and there is no $c$ such that $b<c<a$. 
We say that $a,b\in \cX$ are {\it incomparable} if neither $a\le b$ nor $b\le a$ holds. Otherwise, they are {\it comparable}. 
The\footnote{In a lattice, the join exists, and it is unique.} {\it join} \mn{\\ join} $a\vee b$ of $a$ and $b$ is the smallest element in $\cX$ that is greater than or equal to both $a$ and $b$. 
For two sets $X_1,X_2\subseteq \cX$, we define the {\it join} of $X_1$ and $X_2$ as $X_1\vee X_2=\{x_1\vee x_2~|~x_1\in X_1,x_2\in X_2\}$. We say that $a$ is a {\it minimal} element \mn{minimal element in $\cS$} in $\cS\subset\cX$ if no element in $\cS$ is smaller than $a$. 
We denote by $\Min (\cS)$ \mn{\\ $\Min (\cS)$} the set of all minimal elements in $\cS$. A \textit{chain} is a totally ordered subset of $\cX$. That is, $C\subset\cX$ is a chain if every pair of elements in $C$ is comparable. 

We define {\it the maximal predecessor sum} $\sigma(\cX)$ as the maximum sum of the number of immediate predecessors in a chain from the largest element to the smallest element in a lattice $\cX$. Formally, let $m$ be the largest element of $(\cX,\le)$, and let $X=\{x_1,\ldots,x_r\}$ be its set of immediate predecessors. Define the sub-lattice $(\cX_i,\le)$, where $\cX_i=\{x\in \cX|x\le x_i\}$, with $x_i$ as the largest element. Then, 
$$\sigma(\cX)=|X|+\max_{i\in[r]} \sigma(\cX_i)$$
where $\sigma$ of a singleton set is defined as $0$. 

We will add to the lattice $\cP$ a minimum element $\perp\not\in \cX$ such that $\perp<x$ for all $x\in \cX$. This will ease the analysis and the proofs, which are all true without this element.

When $\cX=\{0,1\}^n$, for two elements $x,y\in \{0,1\}^n$, we define $x\le y$ if and only if $x_i\le y_i$ for all $i\in [n]$. The join $x\vee y$ of $x$ and $y$ is the bitwise OR of $x$ and $y$. It is easy to see that $(\{0,1\}^n,\le)$ is a lattice. 

\subsection{The Model}
The learning criterion we consider is {\it exact learning model}. There is a function \( f:\cX\to\{0,1\}\), called the {\it target function},  
which belongs to a class of functions \( C \). The goal of the learning  
algorithm is to halt and output a formula \( h \) that is  
logically equivalent to \( f \).  

In a {\it membership query}, the learning algorithm supplies an assignment \( a\in \cX \) 
as input to a membership oracle and receives in return  
the value of \( f(a) \).  
In an {\it equivalence query}, the learning algorithm supplies any function \( h:\cX\to\{0,1\} \) as input to an equivalence oracle,  
and the oracle's response is either  
$ \text{``yes''} $  
indicating that \( h \) is equivalent to \( f \), or a {\it counterexample},  which is an assignment \( b \) such that  $h(b) \neq f(b).$

\subsection{Monotone Functions}
In this section, we define the concept of monotone functions and give some results.

Let $a\in \cX$ and $M_a:\cX\to\{0,1\}$ be the function defined by $M_a(x)=1$ if and only if $x\ge a$. 
We call $M_a$ a \textit{monotone term} \mn{monotone term} that is generated by $a$. 
A {\it monotone function} $f$ is a disjunction of monotone terms. 
If $f$ is a monotone function, then $f$ is the disjunction of the monotone terms generated by the elements of $\Min(f^{-1}(1))$. 
Thus,
$$f=\bigvee_{a\in\Min(f^{-1}(1))}M_a.$$
We will denote $\Min(f)=\Min(f^{-1}(1))$. \mn{$\Min(f)$}

The following is a well-known result.
\begin{lemma}\label{monotoneD}
    The function $f:\cX\to\{0,1\}$ is monotone if and only if for every $x\ge y$, we have $f(x)\ge f(y)$.
\end{lemma}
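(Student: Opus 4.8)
The plan is to prove both implications directly from the definition in the excerpt that a monotone function is a disjunction of monotone terms $M_a$, where $M_a(x)=1$ exactly when $x\ge a$. The forward direction reduces to checking that each term $M_a$ respects the order and that disjunction preserves this property, while the backward direction is the substantive one: given the pointwise inequality, I would manufacture an explicit disjunction-of-terms representation of $f$.

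For the forward direction, suppose $f=\bigvee_{a\in S}M_a$ for some $S\subseteq\cX$, and take any pair $x\ge y$. If $f(y)=0$ then $f(x)\ge f(y)$ is immediate, so assume $f(y)=1$. Then some term satisfies $M_a(y)=1$, i.e. $y\ge a$; combining with $x\ge y$ gives $x\ge a$ by transitivity, hence $M_a(x)=1$ and $f(x)=1$. Thus $f(x)\ge f(y)$ in all cases.

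For the backward direction, suppose $f(x)\ge f(y)$ holds whenever $x\ge y$, and set $g=\bigvee_{a\in\Min(f^{-1}(1))}M_a$; the goal is to show $f=g$, which exhibits $f$ as a disjunction of monotone terms and hence as monotone. First, if $f(x)=0$ but $g(x)=1$, then some $a\in\Min(f^{-1}(1))$ satisfies $a\le x$; since $f(a)=1$ and $x\ge a$, the hypothesis forces $f(x)\ge f(a)=1$, a contradiction, so $g(x)\le f(x)$ everywhere. Conversely, if $f(x)=1$, I would produce a minimal element of $f^{-1}(1)$ lying below $x$: the set $\{z\le x : f(z)=1\}$ is nonempty (it contains $x$) and finite, so it has a minimal element $a$, and this $a$ is also minimal in $f^{-1}(1)$, because any $b<a$ with $f(b)=1$ would satisfy $b<a\le x$ and thus belong to the same set, contradicting the minimality of $a$. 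Then $M_a(x)=1$ yields $g(x)=1$, so $f(x)\le g(x)$, and therefore $f=g$.

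The only step that is not pure bookkeeping with the characterization $M_a(x)=1\iff x\ge a$ is the existence of the minimal witness $a$ below $x$, and this is precisely where the finiteness of $\cX$ is needed; without it an infinite descending chain inside $f^{-1}(1)$ could prevent any minimal element from existing. I expect this to be the main (though mild) obstacle, and it is handled by the finite-poset argument above.
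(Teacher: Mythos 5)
Your proof is correct and complete: the forward direction is the straightforward transitivity argument, and the backward direction correctly extracts a minimal witness below $x$ using finiteness of $\cX$. The paper states this lemma without proof as a well-known fact, and your argument is exactly the standard one it implicitly relies on (it also yields, as a byproduct, the canonical representation $f=\bigvee_{a\in\Min(f^{-1}(1))}M_a$ asserted in the same subsection).
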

The \textit{size} of the monotone function $\size(f)$ is defined as $|\Min(f)|=|\Min(f^{-1}(1)|$. 
The elements of~$\Min(f)$ are called the \textit{minimal elements} \mn{minimal element of $f$} of~$f$, and $M_a$, $a\in \Min(f)$, are called the \textit{minterms of~$f$}.\mn{\\ \ \\minterm of $f$.} 
It is easy to see that the minimal elements of a monotone function are incomparable. 

For any Boolean function $f:\cX\to \{0,1\}$, we define $f(\perp)=0$. 

The following result is easy to prove.
\begin{lemma}\label{MEqui}
    Let $f:\cX\to \{0,1\}$ be a monotone function. The element $a$ is a minimal element of $f$ if and only if $f(a)=1$, and for every immediate predecessor $b$ in $\cX\cup\{\perp\}$ of~$a$, we have $f(b)=0$. 
\end{lemma}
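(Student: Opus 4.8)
The plan is to prove the two implications separately, using the definition of minimality together with the monotonicity characterization of Lemma~\ref{monotoneD}.

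For the forward direction, suppose $a\in\Min(f)=\Min(f^{-1}(1))$. By definition $a\in f^{-1}(1)$, so $f(a)=1$. Now let $b$ be any immediate predecessor of $a$ in $\cX\cup\{\perp\}$. If $b=\perp$, then $f(b)=0$ by the convention $f(\perp)=0$. If $b\in\cX$, then $b<a$, and since $a$ is minimal in $f^{-1}(1)$ no strictly smaller element can lie in $f^{-1}(1)$; hence $f(b)=0$. This direction uses only the definitions.

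For the backward direction, assume $f(a)=1$ and $f(b)=0$ for every immediate predecessor $b$ of $a$ in $\cX\cup\{\perp\}$, and suppose toward a contradiction that $a$ is not minimal in $f^{-1}(1)$. Then there is some $c\in f^{-1}(1)$ with $c<a$. The key step is to promote $c$ to an immediate predecessor of $a$ that still dominates it: consider the set $\{x\in\cX : c\le x<a\}$, which is nonempty (it contains $c$) and finite because $\cX$ is finite, and let $b$ be a maximal element of it. Then $c\le b<a$, and $b$ must be an immediate predecessor of $a$, for if some $d$ satisfied $b<d<a$, then $c\le b<d<a$ would place $d$ in the same set strictly above $b$, contradicting the maximality of $b$.

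Finally, I would invoke monotonicity: since $c\le b$ and $f(c)=1$, Lemma~\ref{monotoneD} gives $f(b)\ge f(c)=1$, so $f(b)=1$, contradicting the hypothesis that $f$ vanishes on all immediate predecessors of $a$. This contradiction shows $a$ is minimal, completing the proof. The only nontrivial point is the extraction of an immediate predecessor $b$ with $c\le b$, which is exactly where finiteness of the lattice is used; everything else is immediate from the definitions and the monotonicity of $f$.
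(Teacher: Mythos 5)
Your proof is correct: the forward direction is immediate from the definitions and the convention $f(\perp)=0$, and the backward direction correctly uses finiteness of $\cX$ to promote the witness $c<a$ to an immediate predecessor $b$ of $a$ with $c\le b$, then applies Lemma~\ref{monotoneD}. The paper omits the proof as "easy to prove," and your argument is exactly the standard one it has in mind.
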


We now prove
\begin{lemma}\label{MinORAND}
    For any two monotone functions $g$ and $h$, we have:
    \begin{enumerate}
        \item\label{MinORAND1} $\Min(g\vee h)\subseteq \Min(g)\cup \Min(h)$.
        \item\label{MinORAND2} $\Min(g\wedge h)\subseteq \Min(g)\vee \Min(h)$.
        \item\label{MinORAND3} $u=v\vee w$ if and only if $M_u=M_v\wedge M_w$.
    \end{enumerate}
\end{lemma}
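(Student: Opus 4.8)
The plan is to treat the three parts in turn, in each case reducing everything to the defining property of the join: for any $x,v,w\in\cX$ one has $x\ge v\vee w$ if and only if $x\ge v$ and $x\ge w$, since $v\vee w$ is the least upper bound of $v$ and $w$. I would record this equivalence first, as it drives Parts~\ref{MinORAND2} and~\ref{MinORAND3}. For Part~\ref{MinORAND1}, take $a\in\Min(g\vee h)$, so $(g\vee h)(a)=1$ and hence $g(a)=1$ or $h(a)=1$; say $g(a)=1$. If some $b<a$ had $g(b)=1$, then $(g\vee h)(b)=1$ with $b<a$ would contradict the minimality of $a$ in $(g\vee h)^{-1}(1)$. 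Thus no element of $g^{-1}(1)$ lies strictly below $a$, i.e.\ $a\in\Min(g)$, giving $a\in\Min(g)\cup\Min(h)$.

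For Part~\ref{MinORAND2}, take $a\in\Min(g\wedge h)$, so $g(a)=1$ and $h(a)=1$. Using the representation $g=\bigvee_{x\in\Min(g)}M_x$ together with the finiteness of $\cX$, the value $g(a)=1$ yields some $g_0\in\Min(g)$ with $a\ge g_0$, and likewise some $h_0\in\Min(h)$ with $a\ge h_0$; the join equivalence then gives $a\ge g_0\vee h_0$. The substantive step is to upgrade this to equality: setting $c=g_0\vee h_0$, monotonicity together with $c\ge g_0$ and $c\ge h_0$ gives $g(c)=h(c)=1$, so $c\in(g\wedge h)^{-1}(1)$, and $c\le a$ combined with the minimality of $a$ forces $c=a$. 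Hence $a=g_0\vee h_0\in\Min(g)\vee\Min(h)$.

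For Part~\ref{MinORAND3}, I apply the join equivalence directly. If $u=v\vee w$, then for every $x$ we have $M_u(x)=1$ iff $x\ge u$ iff $x\ge v$ and $x\ge w$ iff $(M_v\wedge M_w)(x)=1$, so $M_u=M_v\wedge M_w$. Conversely, if $M_u=M_v\wedge M_w$, then for all $x$ we have $x\ge u$ exactly when $x\ge v\vee w$; evaluating at $x=u$ gives $u\ge v\vee w$ and evaluating at $x=v\vee w$ gives $v\vee w\ge u$, so $u=v\vee w$ by antisymmetry.

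All three parts are short, and the only one requiring real care is Part~\ref{MinORAND2}: the statement asserts an inclusion rather than an equality, and the work is precisely to check that a minimal point of $g\wedge h$ must coincide with the join of the two minimal elements it dominates, not merely lie above it. This is where minimality and the least-upper-bound property have to be used together.
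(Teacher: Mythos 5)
Your proof is correct and follows essentially the same route as the paper's: the same reduction of Part~\ref{MinORAND2} to showing that a minimal element of $g\wedge h$ equals (not merely dominates) the join of the minimal elements it sits above, and the same least-upper-bound computation for Part~\ref{MinORAND3}. The only cosmetic differences are that you argue Part~\ref{MinORAND1} via global minimality rather than the immediate-predecessor characterization of Lemma~\ref{MEqui}, and you prove the converse of Part~\ref{MinORAND3} by evaluating at $x=u$ and $x=v\vee w$ instead of invoking Part~\ref{MinORAND2}; both are equally valid.
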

\begin{proof}
    To prove item~\ref{MinORAND1}, we use Lemma~\ref{MEqui}. 
    Let $a$ be a minimal element of $g\vee h$. 
    Then $g(a)\vee h(a)=1$ and therefore $g(a)=1$ or $h(a)=1$. 
    For any immediate predecessor $b$ of $a$, we have $g(b)\vee h(b)=0$ which implies that $g(b)=0$ and $h(b)=0$. 
    Therefore $a\in \Min(g)\cup \Min(h)$.

    We now prove item~\ref{MinORAND2}. 
    Let $a$ be a minimal element of $f=g\wedge h$. 
    Then $g(a)\wedge h(a)=1$, and therefore $g(a)=1$ and $h(a)=1$. Let $u$ be a minimal element of $g$ such that $u\le a$ and $w$ be a minimal element of $h$ such that $w\le a$. 
    We now show that $a=u\vee w$. Suppose to the contrary that $a'=u\vee w<a$. 
    Since $a'>u,w$, by Lemma~\ref{monotoneD}, we have $g(a')=1$ and $h(a')=1$. Therefore, $f(a')=1$. Since $a'<a$, and $f(a')=1$, we have $a\not\in \Min(f)$. This is a contradiction. Therefore, $a=u\vee w\in \Min(g)\vee \Min(f)$. 

    We now prove item~\ref{MinORAND3}. ($\Leftarrow$). If $M_u=M_v\wedge M_w$, then by item~\ref{MinORAND2}, we have
    $\{u\}=\Min(M_u)\subseteq \Min(M_v)\vee \Min(M_w)=\{v\vee w\}$. Therefore, $u=v\vee w$.

    ($\Rightarrow$). Now, if $u=v\vee w$, then $M_u(x)=1$ iff $x\ge u=v\vee w$ iff $x\ge v$ and  $x\ge w$ iff $M_v(x)=1$ and $M_w(x)=1$ iff $M_v(x)\wedge M_w(x)=1$.\qed
\end{proof}

\subsection{$d$-Monotone Functions}
This section defines the concept of $d$-monotone functions and proves some results. 

Recall that\footnote{This definition is for any Boolean function $f$. So, $\overline{f}(\perp)=0$, where $\overline{f}$ denotes the negation of $f$.} $f(\perp)=0$.
\begin{definition}
  Let $f:\cX\to\{0,1\}$ be a Boolean function.  We say that $f$ is {\it $d$-monotone} 
if, along any chain $\perp<x_1<x_2<\cdots<x_t$ in $\cX\cup\{\perp\}$, the function changes its value at most $d$ times.   
\end{definition}

It is easy to see that $f$ is monotone if and only if it is $1$-monotone or $0$-monotone ($f=0$). 

We now prove,
\begin{lemma}\label{Rep01}
    Let $g_1,\ldots,g_d:\cX\to\{0,1\}$ be non-constant monotone Boolean functions and $F:\{0,1\}^d\to \{0,1\}$ be any Boolean function. Then\footnote{Note here that $f(\perp)=0$ and may not necessarily be equal to $F(g_1(\perp),\ldots,g_d(\perp))=F(0,0,\ldots,0)$.} $f=F(g_1,\ldots,g_d)$ is $(d+1)$-monotone. 

    If $F(0^d)=0$, then $f$ is $d$-monotone. 
\end{lemma}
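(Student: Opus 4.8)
The plan is to fix an arbitrary chain $\perp<x_1<x_2<\cdots<x_t$ in $\cX\cup\{\perp\}$ and track the vector-valued map $G(x)=(g_1(x),\ldots,g_d(x))\in\{0,1\}^d$ along it, then compose with $F$. Since each $g_i$ is monotone and $g_i(\perp)=0$ (by the convention that every Boolean function vanishes on $\perp$), Lemma~\ref{monotoneD} gives $0=g_i(\perp)\le g_i(x_1)\le\cdots\le g_i(x_t)$, so each coordinate of $G$ is a non-decreasing $0/1$ sequence. Consequently $G(\perp)=0^d$ and $G$ is non-decreasing coordinatewise along the whole chain.

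The key counting step is to bound the number of times $G$ changes value along $\perp,x_1,\ldots,x_t$. I would use the weight $\wt(G(x))=\sum_i g_i(x)$: it equals $0$ at $\perp$, is at most $d$ everywhere, and is non-decreasing along the chain; moreover, because $G$ never decreases in any coordinate, $\wt(G(x))$ strictly increases at every point where $G$ changes value. Hence $G$ changes value at most $d$ times. Since $f$ agrees with $F\circ G$ on $\cX$, the sequence $F(G(\perp)),F(G(x_1)),\ldots,F(G(x_t))$ can change only where $G$ changes, so it also has at most $d$ changes.

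It remains to compare this with the sequence we actually care about, $f(\perp),f(x_1),\ldots,f(x_t)$, which equals $0,F(G(x_1)),\ldots,F(G(x_t))$. This is exactly where the subtlety flagged in the footnote enters, and I expect it to be the only real obstacle: the value $f(\perp)=0$ imposed by convention need not equal $F(G(\perp))=F(0^d)$. When $F(0^d)=0$ the two sequences coincide, so $f$ has at most $d$ changes along the chain and is $d$-monotone, proving the second claim. For general $F$, passing from $F(G(\perp)),F(G(x_1)),\ldots$ to $f(\perp),f(x_1),\ldots$ only replaces the first entry, which alters a single transition (the one from the $\perp$-entry to the $x_1$-entry) and therefore increases the number of changes by at most one. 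Thus $f$ changes value at most $d+1$ times along the chain, giving $(d+1)$-monotonicity. Since the chain was arbitrary, both statements follow.
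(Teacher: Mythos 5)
Your proof is correct and takes essentially the same route as the paper's: both arguments observe that along a chain the composite $f=F(g_1,\ldots,g_d)$ can only change value at the at most $d$ points where some $g_i$ flips from $0$ to $1$, with one possible additional change at $x_1$ caused by the convention $f(\perp)=0$ when $F(0^d)=1$. Your weight-function bookkeeping for counting the flips of $(g_1,\ldots,g_d)$ is just a slightly cleaner phrasing of the paper's enumeration of the flip points $x_{j_1}\le\cdots\le x_{j_d}$.
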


\begin{proof}
    Let $C:\perp<x_1<x_2<\cdots<x_t$ be any chain in $\cX\cup\{\perp\}$. 
    Suppose $g_i$ changes its value from $0$ to $1$ along this chain at $x_{j_i}$ and assume, without loss of generality, that $j_1\le j_2\le \cdots\le j_d$. 
    Then for the elements $\{x_i|1\le i\le j_1-1\}$, the value of the function $f$ is equal to $F(0,0,\ldots,0)$, and for the elements $\{x_i|j_1\le i\le j_2-1\}$, the function $f$ is equal to $F(1,0,\cdots,0)$, and for the elements $\{x_i|j_2\le i\le j_3-1\}$, the function $f$ is equal to $F(1,1,0,\cdots,0)$, etc. 
    That is, the function along the chain $x_1<x_2<\cdots<x_t$ changes its values only on a subset of $\{x_{j_1},x_{j_2},\ldots,x_{j_d}\}$. Since $f(\perp)=0$ (by definition) and this may be not equal to $F(g_1(\perp),\ldots,g_d(\perp))=F(0,0,\ldots,0)$, the function along the chain $C$ changes its values only on a subset of $\{x_1,x_{j_1},x_{j_2},\ldots,x_{j_d}\}$. Therefore, it is $(d+1)$-monotone.

    If $F(0,0,\ldots,0)=0=f(\perp)$, then the function along the chain changes its values only on a subset of $\{x_{j_1},x_{j_2},\ldots,x_{j_d}\}$. Therefore, it is $d$-monotone.\qed
\end{proof}

We note here that for the purpose of learning, we can assume that $F(0^d) = 0$. This is because, if $F(0^d) = 1$, then we can learn $F' = F \oplus 1$ which satisfies $F'(0^d) = 0$, and then recover $F$ as $F=F'\oplus 1$.

\subsection{Minimal Elements of a Function}
In this section, we extend the definition of minimal element to any Boolean function. Since Lemma~\ref{MEqui} is not necessarily true for non-monotone functions, we must define two types of minimal elements: local and global.

For any Boolean function $f:\cX\to\{0,1\}$, we say that $a$ is a {\it local minimal element}\mn{local minimal element of $f$} of $f$ if $f(a)=1$ and for every immediate predecessor $b$ of $a$, $f(b)=0$. 
We denote by $\min (f)$\mn{\\ $\min (f)$} the set of all local minimal elements of $f$. 
We say that $a$ is a {\it global minimal element}\mn{\\ global minimal element of $f$} of $f$ if $f(a)=1$ and for every $b<a$ we have $f(b)=0$. We denote by $\Min(f)$\mn{\\ \ \\ $\Min(f)$} the set of all global minimal elements of $f$. Obviously, every global minimal element of $f$ is also a local minimal element of $f$, and therefore
$$\Min(f)\subseteq \min(f).$$

When the function $f$ is monotone, by Lemma~\ref{monotoneD} and Lemma~\ref{MEqui}, $\Min(f)=\min(f)$. 

We now prove
\begin{lemma}\label{BaseF} 
    Let $F:\{0,1\}^d\to\{0,1\}$ where $F(0^d)=0$. Let $f=F(g_1,g_2,\ldots,g_d)$ where $g_1,g_2,\ldots,g_d$ are monotone functions. Then 
    $$\min(f)\subseteq \bigcup_{I\subseteq [d]} \left( \Min\left(\bigwedge_{i\in I}g_i\right)\right)\subseteq \bigcup_{I\subseteq [d]} \left(\bigvee_{i\in I} \Min(g_i)\right).$$

    If $g_d\Rightarrow g_{d-1}\Rightarrow \cdots \Rightarrow g_1$ then 
    $$\min(f)\subseteq \bigcup_{i=1}^d  \Min(g_i).$$
\end{lemma}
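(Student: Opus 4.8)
The plan is to prove the first inclusion directly from the definition of a local minimal element, and then to obtain the remaining two inclusions as formal consequences of Lemma~\ref{MinORAND} and of the implication chain, respectively.

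For the first inclusion, fix $a\in\min(f)$, so that $f(a)=1$ while $f(b)=0$ for every immediate predecessor $b$ of $a$. Let $I=\{i\in[d]\mid g_i(a)=1\}$ be the set of indices on which the inner functions fire at $a$, and set $h=\bigwedge_{i\in I}g_i$; this $h$ is monotone as a conjunction of monotone functions. I claim $a\in\Min(h)$, which places $a$ in $\bigcup_{I\subseteq[d]}\Min(\bigwedge_{i\in I}g_i)$. First, $I\neq\emptyset$: if it were empty then $(g_1(a),\ldots,g_d(a))=0^d$ and $f(a)=F(0^d)=0$, contradicting $f(a)=1$. Hence $h(a)=\bigwedge_{i\in I}g_i(a)=1$.

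It remains to check the local condition that $h(b)=0$ at every immediate predecessor $b$ of $a$; Lemma~\ref{MEqui}, applied to the monotone function $h$, will then give $a\in\Min(h)$ directly. Fix such a $b$. For $i\notin I$ we have $g_i(a)=0$, and monotonicity forces $g_i(b)=0$ since $b<a$; thus the vector $(g_1(b),\ldots,g_d(b))$ is the indicator of some subset $J\subseteq I$. If $J=I$ then $(g_1(b),\ldots,g_d(b))=(g_1(a),\ldots,g_d(a))$, so $f(b)=f(a)=1$, contradicting $f(b)=0$. Hence $J\subsetneq I$, so some $i\in I$ has $g_i(b)=0$ and therefore $h(b)=0$. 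By Lemma~\ref{MEqui} this yields $a\in\Min(h)$, establishing the first inclusion. The second inclusion is then purely formal: iterating item~\ref{MinORAND2} of Lemma~\ref{MinORAND} along any ordering of $I$, and using associativity of the set-join, gives $\Min(\bigwedge_{i\in I}g_i)\subseteq\bigvee_{i\in I}\Min(g_i)$. For the final statement, assume $g_d\Rightarrow\cdots\Rightarrow g_1$, i.e.\ $g_d\le\cdots\le g_1$ pointwise. Then for any nonempty $I$ the conjunction collapses to its smallest member, $\bigwedge_{i\in I}g_i=g_{\max I}$, so $\Min(\bigwedge_{i\in I}g_i)=\Min(g_{\max I})$; ranging over all nonempty $I\subseteq[d]$ produces exactly $\bigcup_{i=1}^d\Min(g_i)$, which combined with the first inclusion gives $\min(f)\subseteq\bigcup_{i=1}^d\Min(g_i)$.

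The step I expect to be the crux is the passage from local minimality of $a$ for the \emph{non-monotone} $f$ to \emph{global} minimality of $a$ for the monotone conjunction $h$. The leverage comes from two facts working together: monotonicity of $h$ reduces global minimality to a check at immediate predecessors (this is precisely Lemma~\ref{MEqui}), and the hypothesis $F(0^d)=0$, together with the observation that at an immediate predecessor only coordinates in $I$ can change, forces at least one index of $I$ to drop out at each immediate predecessor $b$, which is exactly what makes $h$ vanish there.
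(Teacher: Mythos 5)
Your proof is correct and follows essentially the same route as the paper's: take $I=\{i\mid g_i(a)=1\}$, use $F(0^d)=0$ to get $I\neq\emptyset$, show $h=\bigwedge_{i\in I}g_i$ vanishes at every immediate predecessor because otherwise the argument vector to $F$ would be unchanged, conclude $a\in\Min(h)$, and finish with Lemma~\ref{MinORAND} (and the collapse $h=g_{\max I}$ in the nested case). Your explicit invocation of Lemma~\ref{MEqui} to pass from the immediate-predecessor check to global minimality of $h$ is a small clarification of a step the paper leaves implicit, but the argument is the same.
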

\begin{proof}
    Let $a$ be a local minimal element of $f$. Then $f(a)=1$ and for every immediate predecessor $b$ of $a$, we have $f(b)=0$. If $g_i(a)=0$ for all $i\in [d]$, then $f(a)=F(0^d)=0$. Therefore, there is some $i$ such that $g_i(a)=1$.   

    Let $I\subseteq [d]$ be such that $g_i(a)=1$ for all $i\in I$ and $g_i(a)=0$ for all $i\not\in I$. Let $h=\wedge_{i\in I}g_i$. Then $h(a)=1$. Let $b$ be any immediate predecessor of $a$. Since $b<a$, and $g_i$ are monotone, $g_i(b)=0$ for every $i\not\in I$. Since $f(a)=1\not=0=f(b)$, we must have $g_i(b)=0$ for some $i\in I$. Therefore, $h(b)=0$. Thus, $a$ is a minimal element of $h=\wedge_{i\in I}g_i$, and by Lemma~\ref{MinORAND}, $a\in \vee_{i\in I}\Min(g_i)$.

    If $g_d\Rightarrow g_{d-1}\Rightarrow \cdots\Rightarrow g_1$, then $h=\wedge_{i\in I}g_i=g_j$ for $j=\max I$, and then $a\in \Min(g_j)$. \qed
\end{proof}

\subsection{The Minimum Monotone Closure of a Function}
In this section, we introduce the minimum monotone closure of a function as defined in~\cite{Bshouty95} and the strict monotone representation of a Boolean function as defined in~\cite{TakimotoSM00}, and show how to use them for $d$-monotone functions.

Let $f:\cX\to \{0,1\}$ be any function. We define the \textit{minimum monotone closure} of $f$ (or simply the \textit{monotone function} of $f$)\mn{Monotone function of $f$}, $\cM(f):\cX\to\{0,1\}$ to be the function that satisfies $\cM(f)(x)=1$ if there is $y\le x$ such that $f(y)=1$. The following is trivial; see, for example,~\cite{Bshouty95}.
\begin{lemma}
    We have
    \begin{enumerate}
        \item $\cM(f)$ is the minimum monotone function\footnote{Here, ``minimum'' means that for any other monotone function $g$, if $f\Rightarrow g$, then $\cM(f)\Rightarrow g$.} that satisfies $f\Rightarrow \cM(f)$. In particular,
        \item If $f(a)=1$, then $\cM(f)(a)=1$, and if $\cM(f)(b)=0$, then $f(b)=0$.
        \item $\Min(\cM(f))=\Min(f)$.
    \end{enumerate}
\end{lemma}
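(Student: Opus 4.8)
The plan is to prove the three properties of the minimum monotone closure $\cM(f)$ directly from its definition: $\cM(f)(x)=1$ if and only if there exists $y\le x$ with $f(y)=1$.

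\textbf{First, I would verify that $\cM(f)$ is monotone and that $f\Rightarrow \cM(f)$.} Monotonicity follows from Lemma~\ref{monotoneD}: if $x'\ge x$ and $\cM(f)(x)=1$, then there is $y\le x\le x'$ with $f(y)=1$, so $\cM(f)(x')=1$ as well. For $f\Rightarrow \cM(f)$, if $f(a)=1$, take $y=a\le a$ in the definition to get $\cM(f)(a)=1$; this already gives item~2. To establish minimality (item~1), suppose $g$ is any monotone function with $f\Rightarrow g$. If $\cM(f)(x)=1$, pick $y\le x$ with $f(y)=1$; then $g(y)=1$, and since $g$ is monotone and $x\ge y$, Lemma~\ref{monotoneD} gives $g(x)\ge g(y)=1$. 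Hence $\cM(f)\Rightarrow g$, as required. The contrapositive form of item~2 ("if $\cM(f)(b)=0$ then $f(b)=0$") is just the contrapositive of $f\Rightarrow\cM(f)$.

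\textbf{Next, I would prove item~3, $\Min(\cM(f))=\Min(f)$.} Since $\cM(f)$ is monotone, its global and local minimal elements coincide. For the inclusion $\Min(f)\subseteq\Min(\cM(f))$: let $a\in\Min(f)$, so $f(a)=1$ and $f(b)=0$ for all $b<a$. Then $\cM(f)(a)=1$ by item~2. For any $b<a$, any $z\le b$ satisfies $z<a$, hence $f(z)=0$; so $\cM(f)(b)=0$, meaning $a$ is a global minimal element of $\cM(f)$. Conversely, for $\Min(\cM(f))\subseteq\Min(f)$: let $a\in\Min(\cM(f))$. Then $\cM(f)(a)=1$, so there is $y\le a$ with $f(y)=1$; but if $y<a$ then $\cM(f)(y)=1$ would contradict $a$ being minimal for $\cM(f)$, so $y=a$ and $f(a)=1$. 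Moreover for $b<a$ we have $\cM(f)(b)=0$, and item~2 forces $f(b)=0$; thus $a\in\Min(f)$.

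\textbf{I do not expect a serious obstacle here}, since the statement is flagged as trivial and is standard. The only point requiring care is the direction $\Min(\cM(f))\subseteq\Min(f)$, where one must rule out a strictly smaller witness $y<a$ using the minimality of $a$ in $\cM(f)$ together with the fact that $\cM(f)$ propagates a $1$ upward. Keeping the distinction between global and local minimal elements straight is the main bookkeeping concern, but it collapses immediately because $\cM(f)$ is monotone.
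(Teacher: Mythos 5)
Your proof is correct and complete; the paper itself offers no proof, simply declaring the lemma trivial and citing prior work, and your argument is exactly the standard one that would be intended. The one point that genuinely needs care — ruling out a strict witness $y<a$ in the direction $\Min(\cM(f))\subseteq\Min(f)$ via the minimality of $a$ — is handled properly, and your use of global minimality matches the paper's definition of $\Min$ for general (not necessarily monotone) functions.
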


The following lemma is proved in~\cite{TakimotoSM00} for any Boolean function when $d=n$. For $d$-monotone functions, we prove:
\begin{lemma}\label{MaimM}
Let $f$ be a $d$-monotone function. Define $f_{i+1}=f_{i}\oplus\cM(f_i)=\overline{f_i}\wedge \cM(f_i)$, where $f_1=f$. Then
$$f=\cM(f_1)\oplus \cM(f_2)\oplus \cdots \oplus \cM(f_d).$$
\end{lemma}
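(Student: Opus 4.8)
The plan is to prove the identity by a telescoping argument, reducing everything to the single claim that $f_{d+1}$ is identically $0$. First I would record the elementary identity $\cM(f_i)=f_i\oplus f_{i+1}$, which follows by XOR-ing the defining relation $f_{i+1}=f_i\oplus\cM(f_i)$ with $f_i$ (and note $f_i\oplus\cM(f_i)=\overline{f_i}\wedge\cM(f_i)$ because $f_i\Rightarrow\cM(f_i)$, so the two can differ only where $f_i=0$ and $\cM(f_i)=1$). Summing over $i$ and cancelling the repeated middle terms gives
$$\bigoplus_{i=1}^{d}\cM(f_i)=\bigoplus_{i=1}^{d}(f_i\oplus f_{i+1})=f_1\oplus f_{d+1}=f\oplus f_{d+1},$$
so the lemma is equivalent to the statement $f_{d+1}\equiv 0$.

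The core of the proof is therefore the degree-reduction claim: if $g$ is $k$-monotone (with $k\ge1$), then $g_2:=\overline{g}\wedge\cM(g)$ is $(k-1)$-monotone. Granting this, starting from $f_1=f$ being $d$-monotone and iterating $d$ times shows $f_{d+1}$ is $0$-monotone; since $f_{d+1}(\perp)=0$, a $0$-monotone function is identically $0$, which is exactly what we need. To prove the claim I would fix an arbitrary chain $\perp<x_1<\cdots<x_t$ and compare, step by step, the value changes of $g_2$ with those of $g$ along it. Let $m$ be the first index with $\cM(g)(x_m)=1$. For $k<m$, monotonicity of $\cM(g)$ forces $\cM(g)(x_k)=0$, hence $g(x_k)=0$ and $g_2(x_k)=0$; for $k\ge m$ we have $\cM(g)(x_k)=1$, so $g_2(x_k)=\overline{g(x_k)}$. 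Writing $N$ for the number of value changes of $g$ strictly above $x_m$ on the chain, a direct count (starting from $g_2(\perp)=0$ and $g(\perp)=0$) gives that $g_2$ changes $[g(x_m)=0]+N$ times while $g$ changes $[g(x_m)=1]+N$ times.

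When $g(x_m)=1$ these counts differ by exactly one in our favour, so $g_2$ changes $N\le k-1$ times. The main obstacle is the opposite case $g(x_m)=0$, where naively $g_2$ picks up one extra change relative to $g$. The fix exploits that $\cM(g)(x_m)=1$ together with $g(x_m)=0$ forces an \emph{off-chain} witness, an element $y<x_m$ with $g(y)=1$. I would then splice a maximal chain from $\perp$ through $y$ up to $x_m$ and continue along $x_m<x_{m+1}<\cdots<x_t$; on this new chain the value pattern $0,\ldots,1,\ldots,0$ between $\perp$ and $x_m$ contributes at least two changes, and the part above $x_m$ contributes exactly $N$, so $g$ changes at least $N+2$ times. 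Since $g$ is $k$-monotone this yields $N\le k-2$, whence $g_2$ changes $1+N\le k-1$ times, as required. This auxiliary-chain construction in the bad case is the only non-routine step; everything else is bookkeeping. (An alternative route I would keep in reserve is to define the level function $\ell(x)$ as the maximum number of value changes of $f$ along a chain from $\perp$ to $x$, prove $\cM(f_i)=[\ell\ge i]$ and $f=[\ell\ \mathrm{odd}]$ by induction, and then read off $\bigoplus_{i=1}^{d}[\ell\ge i]=\ell\bmod 2=f$ using $\ell\le d$; there the delicate point is showing that a maximizing chain realizes every intermediate level value.)
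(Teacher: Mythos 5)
Your proof is correct, but it takes a genuinely different route from the paper's. The paper first establishes the structural facts $\cM(f_{i+1})\Rightarrow\cM(f_i)$ and $\Min(\cM(f_i))\cap\Min(\cM(f_{i+1}))=\emptyset$, uses them to verify the identity $f=\cM(f_1)\oplus\cdots\oplus\cM(f_m)$ pointwise (for each $x$, tracking the largest $j$ with $\cM(f_j)(x)=1$), and only then bounds the number of nonzero terms by $d$ by building a \emph{descending} chain through minimal elements of $\cM(f_{d+1}),\cM(f_d),\ldots,\cM(f_1)$ on which $f$ alternates $d+1$ times. You instead telescope $\cM(f_i)=f_i\oplus f_{i+1}$ to reduce the entire lemma to the single claim $f_{d+1}\equiv 0$, and obtain that from a degree-reduction lemma: $g\mapsto\overline{g}\wedge\cM(g)$ sends a $k$-monotone function to a $(k-1)$-monotone one. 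Both arguments ultimately exploit the same phenomenon --- a point where $\cM(g)=1$ but $g=0$ forces an off-chain witness below it that contributes extra alternations --- but your version is more modular: the reduction to $f_{d+1}=0$ is a one-line identity, and the degree-reduction lemma is a clean standalone statement (your case analysis is sound, including the degenerate situations: if $\cM(g)$ vanishes on the whole chain then $g_2$ never changes, and for $k=1$ the bad case $g(x_m)=0$ yields $N\le -1$, i.e., cannot occur, consistent with $\cM(g)=g$ for monotone $g$). What the paper's longer itemized proof buys is that its intermediate items are exactly the facts reused later in Lemma~\ref{Fi} (the nestedness of the $\cM(f_i)$ and the disjointness of their sets of minimal elements); with your approach those properties would still need separate proofs, though the identity $f_i=\cM(f_i)\oplus\cdots\oplus\cM(f_d)$ of Lemma~\ref{Fi} falls out of your telescoping for free.
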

\begin{proof}
    We prove the result by proving the following items: 
    \begin{enumerate}
        \item \label{MaimM1}$\cM(f_{i+1})\Rightarrow \cM(f_i)$.
        \item \label{MaimM2}If $z\in \Min(\cM(f_i))$, then $\cM(f_i)(z)=1$ and $\cM(f_{i+1})(z)=0$. In particular, $\Min(\cM(f_i))\cap \Min(\cM(f_{i+1}))=\emptyset.$ 
        \item \label{MaimM3}There exists $m$ such that $\cM(f_i)(x)=0$ for all $i> m$ and all $x$. 
        \item \label{MaimM4} Let $g=\cM(f_1)\oplus \cM(f_2)\oplus \cdots \oplus \cM(f_m)$. If $z\in\Min(\cM(f_j))=\Min(f_j)$, then $g(z)=(j\mod 2)$.
        \item \label{MaimM5} Let $g=\cM(f_1)\oplus \cM(f_2)\oplus \cdots \oplus \cM(f_m)$. Then $f=g$.
        \item \label{MaimM6}If $f$ is $d$-monotone, then $g(x)=\cM(f_1)\oplus \cM(f_2)\oplus \cdots \oplus \cM(f_d)$.
    \end{enumerate}   
    We prove item~\ref{MaimM1}. If $\cM(f_{i+1})=0$, the result follows.
    If $\cM(f_{i+1})\not=0$, then let $z$ be any element in $\cX$ such that $\cM(f_{i+1})(z)=1$. Thus, there exist $y\le z$ such that $f_{i+1}(y)=1$. Since $1=f_{i+1}(y)=\overline{f_i(y)}\wedge \cM(f_i)(y)$, we have $\cM(f_i)(y)=1$. Since $\cM(f_i)$ is monotone and $z\ge y$, we also have $\cM(f_i)(z)=1$. 
    Therefore, $\cM(f_{i+1})\Rightarrow \cM(f_i)$. 
 
    We now prove item~\ref{MaimM2}. Let $z\in \Min(\cM(f_i))=\Min(f_i)$. Then $f_i(z)=1$ and $\cM(f_i)(z)=1$. Thus, $f_{i+1}(z)=f_i(z)\oplus\cM(f_i)(z)=0$. Since $z\in \Min(\cM(f_i))=\Min(f_i)$, for every $y< z$ we have $f_i(y)=0$ and $\cM(f_i)(y)=0$, and therefore for every $y\le z$ we have $f_{i+1}(y)=f_i(y)\oplus\cM(f_i)(y)=0$. Therefore, $\cM(f_{i+1})(z)=0$. 

    Items \ref{MaimM1} and \ref{MaimM2} imply that $\cM(f_{i+1})\Rightarrow \cM(f_i)$ and $\cM(f_{i+1})\not=\cM(f_i)$. This implies item~\ref{MaimM3}.

    We now show item~\ref{MaimM4}. 
    Let $z\in \Min(\cM(f_j))$. By item \ref{MaimM2}, we have $\cM(f_{j})(z)=1$ and $\cM(f_{j+1})(z)=0$. Therefore, by item~\ref{MaimM1}, $\cM(f_{i})(z)=0$ for all $i\ge j+1$ and $\cM(f_i)(z)=1$ for all $i\le j$. This implies the result.

     We now prove item~\ref{MaimM5}. Let $g=\cM(f_1)\oplus \cM(f_2)\oplus \cdots \oplus \cM(f_m)$.
    Let $x\in \cX$. If $\cM(f_1)(x)=0$, then $f(x)=f_1(x)=0$, and by item~\ref{MaimM1}, $\cM(f_i)(x)=0$ for all $i$, and therefore $f(x)=g(x)$. If $\cM(f_j)(x)=1$ and $\cM(f_{j+1})(x)=0$, then by item~\ref{MaimM1}, $\cM(f_i)(x)=1$ for all $i\le j$ and $\cM(f_{i})(x)=0$ for all $i>j$. Therefore, $g(x)=(j\mod 2)$. Since $\cM(f_{j+1})(x)=0$, we have $f_{j+1}(x)=0$. Since for $i\le j$, $f_{i+1}(x)=f_{i}(x)\oplus\cM(f_{i})(x)=f_i(x)\oplus 1$, we have $f_i(x)=f_{i+1}(x)\oplus 1$. Now, since $f_{j+1}(x)=0$, we get $f(x)=f_1(x)=(j\mod 2)$. Therefore $f(x)=g(x)$.

    To prove item~\ref{MaimM6}, it is enough to show that $\cM(f_{d+1})=0$. Assume to the contrary $\cM(f_{d+1})\not=0$. We construct a chain of $d+2$ elements in $\cX\cup\{\perp\}$ with alternating values in $f$ and get a contradiction. We start from $x_{d+1}$ a minimal element of $\cM(f_{d+1})$. By items~\ref{MaimM4} and~\ref{MaimM5}, $f(x_{d+1})=g(x_{d+1})=(d+1\mod 2)$. By item~\ref{MaimM2}, $x_{d+1}\not\in \Min(\cM(f_d))$ and since $\cM(f_{d+1})\Rightarrow \cM(f_{d})$, $\cM(f_{d})(x_{d+1})=1$ and therefore there is a minimal element $x_d<x_{d+1}$ of $\cM(f_d)$. By items~\ref{MaimM4} and~\ref{MaimM5}, $f(x_{d})=g(x_{d})=(d\mod 2)\not=f(x_{d+1})$, and so on.
    
    This constructs a chain $x_1<x_2<\cdots<x_{d+1}$ with alternating values in $f$. 
    Since $x_1\in \Min(\cM(f_1))=\Min(f_1)$, we have $f(x)=f_1(x)=1$. We now add $\perp$ at the beginning of the chain and get a chain where, along this chain, the value of $f$ is changed $d+1$ times. Therefore, $\cM(f_{d+1})=0$. \qed
\end{proof}

Obviously, this representation is unique. We call such representation the \textit{strict monotone representation}\mn{strict monotone representation of $f$} of $f$. 

The following lemma presents some properties of this representation.
\begin{lemma}\label{Fi}
        Let $f$ be $d$-monotone function and let $f=\cM(f_1)\oplus \cdots\oplus \cM(f_d)$ be the strict monotone representation of $f$. Then
\begin{enumerate}
    \item \label{Fi1} $\cM(f_d)\Rightarrow\cM(f_{d-1})\Rightarrow \cdots\Rightarrow \cM(f_1)$.
    \item\label{Fi2} $f_i=\cM(f_i)\oplus \cM(f_{i+1})\oplus\cdots\oplus \cM(f_d).$
    \item \label{Fi3} For $j>i$, we have $\Min(\cM(f_i))\cap \Min(\cM(f_j))=\emptyset$.
\end{enumerate}  
\end{lemma}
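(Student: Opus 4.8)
The plan is to derive all three items directly from the properties already established inside the proof of Lemma~\ref{MaimM}, principally item~\ref{MaimM1} (the single-step implication $\cM(f_{i+1})\Rightarrow \cM(f_i)$) and item~\ref{MaimM2} (disjointness of the minimal sets of adjacent terms). No new constructions are needed; everything reduces to chaining these facts correctly and to extracting one termination statement from $d$-monotonicity.

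For item~\ref{Fi1}, I would simply compose the single-step implications of item~\ref{MaimM1}: since $\cM(f_{i+1})\Rightarrow \cM(f_i)$ holds for every $i$, transitivity of $\Rightarrow$ immediately yields the chain $\cM(f_d)\Rightarrow \cM(f_{d-1})\Rightarrow \cdots\Rightarrow \cM(f_1)$. This step is essentially free.

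For item~\ref{Fi2}, I would argue by downward induction on $i$, using the defining recurrence rewritten as $f_i=f_{i+1}\oplus \cM(f_i)$, which follows from $f_{i+1}=f_i\oplus \cM(f_i)$ because $\oplus$ is self-inverse. The base case is $f_d=\cM(f_d)$, which I would establish as follows: since $f$ is $d$-monotone, the argument in the proof of item~\ref{MaimM6} gives $\cM(f_{d+1})=0$; because $f_{d+1}\Rightarrow \cM(f_{d+1})$, this forces $f_{d+1}=0$, and hence $f_d=f_{d+1}\oplus \cM(f_d)=\cM(f_d)$. For the inductive step, assuming $f_{i+1}=\cM(f_{i+1})\oplus \cdots\oplus \cM(f_d)$ and substituting into $f_i=f_{i+1}\oplus \cM(f_i)$ gives $f_i=\cM(f_i)\oplus \cM(f_{i+1})\oplus \cdots\oplus \cM(f_d)$, completing the induction. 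I expect this to be the only point requiring genuine care, since it hinges on correctly pulling $\cM(f_{d+1})=0$ out of the $d$-monotonicity hypothesis; without that termination fact the base case collapses.

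For item~\ref{Fi3}, I would fix $z\in \Min(\cM(f_i))$ and prove the stronger statement that $\cM(f_j)(z)=0$ for every $j>i$. By item~\ref{MaimM2} we have $\cM(f_{i+1})(z)=0$, and by item~\ref{Fi1} (equivalently, by repeated application of item~\ref{MaimM1}) $\cM(f_j)\Rightarrow \cM(f_{i+1})$ for all $j\ge i+1$, so $\cM(f_j)(z)=0$ for all such $j$. Since membership in $\Min(\cM(f_j))$ requires $\cM(f_j)(z)=1$, no element of $\Min(\cM(f_i))$ can lie in $\Min(\cM(f_j))$, and therefore $\Min(\cM(f_i))\cap \Min(\cM(f_j))=\emptyset$ for every $j>i$.
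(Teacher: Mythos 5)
Your proof is correct, and for items~\ref{Fi1} and~\ref{Fi3} it follows essentially the same route as the paper: item~\ref{Fi1} is just transitivity of the single-step implications from item~\ref{MaimM1} of Lemma~\ref{MaimM}, and item~\ref{Fi3} combines item~\ref{MaimM2} with the implication chain (you argue directly that $\cM(f_j)(z)=0$ for all $j>i$, where the paper phrases the same two facts as a contradiction via $a\in\Min(\cM(f_{i+1}))$; yours is arguably the cleaner write-up). The only structural difference is in item~\ref{Fi2}: you run the induction downward from the base case $f_d=\cM(f_d)$, which forces you to re-extract $\cM(f_{d+1})=0$ from the proof of Lemma~\ref{MaimM} and conclude $f_{d+1}=0$ via $f_{d+1}\Rightarrow\cM(f_{d+1})$, whereas the paper runs the induction upward with the base case $f_1=f=\cM(f_1)\oplus\cdots\oplus\cM(f_d)$, which is exactly the statement of Lemma~\ref{MaimM} and hence free. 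Both inductions rest on the same recurrence $f_{i+1}=f_i\oplus\cM(f_i)$ and the self-inverse property of $\oplus$; the paper's direction avoids the extra termination step, while yours makes the identity $f_d=\cM(f_d)$ explicit as a byproduct.
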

\begin{proof}
    Item~\ref{Fi1} is item~\ref{MaimM1} in the proof of Lemma~\ref{MaimM}.
    
    The proof of item~\ref{Fi2} is by induction. First, by Lemma~\ref{MaimM}, we have $f_1=f=\cM(f_1)\oplus \cdots\oplus\cM(f_d)$. Then, by the induction hypothesis, we have
    $$f_{i+1}=f_i\oplus\cM(f_i)=\cM(f_i)\oplus \cM(f_{i+1})\oplus\cdots\oplus \cM(f_d)\oplus\cM(f_i)$$ $$=\cM(f_{i+1})\oplus\cdots\oplus \cM(f_d).$$

    To prove item~\ref{Fi3}, suppose to the contrary $a\in \Min(\cM(f_i))\cap \Min(\cM(f_j))$. Since $\cM(f_j)\Rightarrow \cM(f_{i+1})\Rightarrow \cM(f_i)$, it follows that $a\in \Min(\cM(f_{i+1})$. This contradicts item~\ref{MaimM2} in the proof of Lemma~\ref{MaimM}.\qed
\end{proof}

\section{The Algorithm}
In this section, we first provide a procedure that builds the hypothesis to the equivalent query. Then we present the algorithm that learns any $d$-monotone function of the form $F(g_1,\ldots,g_d)$, where $F:\{0,1\}^d\to \{0,1\}$ and each $g_i:\cX\to \{0,1\}$ is any monotone Boolean function.

Finally, we establish the following result.

\noindent
\textbf{Theorem}~\ref{THEO1} \textit{Let $(\cX,\le)$ be a finite lattice. 
    The class of $d$-monotone functions $f:\cX\to\{0,1\}$, that are represented in the form $f=F(g_1,g_2,\ldots,g_d)$, where $F$ is any Boolean function $F:\{0,1\}^d\to\{0,1\}$ and $g_1,\ldots,g_d:\cX\to \{0,1\}$ are any monotone functions, is learnable in time $\sigma(\cX)\cdot (\size(f)/d+1)^{d}$ where $\sigma(\cX)$ is the maximum sum of the number of immediate predecessors in a chain from the largest element to the smallest element in the lattice $\cX$ and  $\size(f)=\size(g_1)+\cdots+\size(g_d)$, where $\size(g_i)$ is the number of minimal elements in $g_i^{-1}(1)$. 
\\
    The algorithm asks at most $(\size(f)/d+1)^{d}$ equivalence queries and $\sigma(\cX)\cdot (\size(f)/d+1)^{d}$ membership queries.}

For the lattice $\{0,1\}^n$ with the standard $\le$, we have

\noindent
\textbf{Corollary}~\ref{Coroll}
    \textit{The class of $d$-monotone functions $f:\{0,1\}^n\to \{0,1\}$ that are represented in the form $f=F(g_1,g_2,\ldots,g_d)$, where $F$ is any Boolean function and $g_1,\ldots,g_d$ are any monotone DNF, is learnable in time $O(n^2)\cdot (\size(f)/d+1)^{d}$, where $\size(f)=\size(g_1)+\cdots+\size(g_d)$.  
\\
    The algorithm asks at most $(\size(f)/d+1)^{d}$ equivalence queries and $n^2\cdot (\size(f)/d+1)^{d}$ membership queries.}

\subsection{Consistent Hypothesis}
In this section, we give a procedure \textbf{Consistent} that receives $d$ and $\cX_0,\cX_1\subseteq \cX$ such that there is a $d$-monotone function $f$ that satisfies $f(x)=0$ for all $x\in \cX_0$ and $f(x)=1$ for all $x\in \cX_1$. The procedure returns a hypothesis $h$ that is a $d$-monotone function consistent with $f$ on $\cX_0\cup\cX_1$. That is, $h(x)=f(x)$ for all $x\in \cX_0\cup\cX_1$.

To establish the correctness and analyze the algorithm's complexity, we first prove two lemmas.
\begin{lemma}\label{PROM}
     Let $\cX_0,\cX_1\subseteq \cX$. Suppose there exists a $d$-monotone function $f$ such that $f(x)=0$ for all $x\in \cX_0$ and $f(x)=1$ for all $x\in \cX_1$. \textbf{Consistent}$(d,\cX_0,\cX_1)$ runs in polynomial time and constructs a $d$-monotone function $h$ of size $O(|\cX_0|+|\cX_1|)$ that is consistent with $f$ on $\cX_0\cup\cX_1$. 
\end{lemma}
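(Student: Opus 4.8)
The plan is to build the hypothesis incrementally as an XOR of monotone layers, repeatedly peeling off the sample points that the current hypothesis gets wrong. Concretely, set $h^{(0)}=0$ and let $W_0=\cX_1$ be its set of wrong points (all positive examples, since $h^{(0)}$ already agrees with $f$ on $\cX_0$). At stage $j\ge 1$, given the current wrong set $W_{j-1}=\{x\in\cX_0\cup\cX_1:\ h^{(j-1)}(x)\neq f(x)\}$, define the monotone layer $m_j=\cM(W_{j-1})$ and update $h^{(j)}=h^{(j-1)}\oplus m_j$; stop as soon as $W_{j-1}=\emptyset$ and output $h=h^{(j-1)}$. Each $m_j$ is stored by its generators $\Min(W_{j-1})$, so every stage only manipulates the sample and runs in time polynomial in $|\cX_0|+|\cX_1|$.

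First I would record two easy structural facts. Since $m_j$ equals $1$ on all of $W_{j-1}$, flipping by $m_j$ corrects exactly the points of $W_{j-1}$, so $W_{j-1}\cap W_j=\emptyset$ and $W_j$ consists of previously correct points that $m_j$ newly spoils; in particular $W_j\subseteq m_j^{-1}(1)$, whence $m_{j+1}=\cM(W_j)\Rightarrow m_j$ because $m_j$ is monotone. Thus the layers satisfy the chain condition $\cdots\Rightarrow m_2\Rightarrow m_1$, and by the XOR-along-a-chain reasoning already used in the proof of Lemma~\ref{MaimM}, any output $h=m_1\oplus\cdots\oplus m_i$ with $i$ nonzero layers is automatically $i$-monotone, and it is consistent with $f$ on $\cX_0\cup\cX_1$ once we stop (by definition of stopping).

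The crux is to cap the number of layers at $d$, and here I would invoke the promise that some $d$-monotone $f$ fits the sample. Suppose $W_j\neq\emptyset$; choosing $x_j\in W_j$ and using $m_j(x_j)=1$, I can pull back through the defining closures to produce a strictly decreasing chain $x_j>x_{j-1}>\cdots>x_0$ with $x_\ell\in W_\ell$ and $x_0\in\cX_1$ (strictness uses $x_\ell\notin W_{\ell-1}$). Since $x_\ell\in W_\ell\subseteq m_\ell^{-1}(1)$ and the layers form a chain, $m_1,\ldots,m_{\ell}$ all equal $1$ at $x_{\ell+1}\ge x_\ell$, which pins $f(x_\ell)=h^{(\ell-1)}(x_\ell)=(\ell-1)\bmod 2$, i.e. $f$ alternates along the chain. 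Prepending $\perp$ (with $f(\perp)=0$) yields a chain on which $f$ changes value $j+1$ times, so $d$-monotonicity forces $j+1\le d$; hence $W_d=\emptyset$, the loop halts after at most $d$ layers, and $h$ is $d$-monotone.

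It remains to bound $\size(h)=\sum_j|\Min(m_j)|$, for which I would prove the sets $\Min(m_j)=\Min(W_{j-1})$ are pairwise disjoint. If $z\in\Min(m_j)$, then any $w\in W_j$ with $w\le z$ pulls back to some $w'\in W_{j-1}$ with $w'\le z$; minimality of $z$ in $W_{j-1}$ gives $w'=w=z$, placing $z\in W_j\cap W_{j-1}=\emptyset$, a contradiction. Thus $m_{j+1}(z)=0$, and the chain condition propagates $m_\ell(z)=0$ for all $\ell>j$, so the $\Min(m_j)$ are disjoint subsets of $\cX_0\cup\cX_1$ and $\size(h)\le|\cX_0|+|\cX_1|$. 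I expect the termination bound to be the main obstacle: getting the alternating-chain construction and its parity bookkeeping exactly right so that $d$-monotonicity of the promised $f$ limits the layers to $d$. Everything else—the chain condition, consistency at termination, and the size bound—then follows by short arguments.
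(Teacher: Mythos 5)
Your proposal is correct and is essentially the paper's own argument: your layers $m_j=\cM(W_{j-1})$ are exactly the $F_i$ of procedure \textbf{Consistent} (the paper's $\cS_1$ is your wrong set $W_{j-1}$), and your termination proof is the same alternating-chain contradiction --- pull a strictly decreasing chain back through the layers to a point of $\cX_1$, prepend $\perp$, and contradict the $d$-monotonicity of the promised $f$ --- merely organized as one global chain rather than as the paper's induction on $d$. A minor plus of your write-up is that you explicitly prove the size bound via disjointness of the sets $\Min(m_j)$, which the paper leaves implicit in Lemma~\ref{PROM} and only makes precise later in Lemma~\ref{FiMhi}.
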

\begin{proof}
    Consider the algorithm \textbf{Consistent} in Algorithm~\ref{Algo1n}. We prove the correctness by induction on $d$. 

    For $d=1$, the function $f$ is monotone. Suppose there is a monotone function such that $f(x)=0$ for $x\in \cX_0$ and $f(x)=1$ for $x\in \cX_1$. Then, there is no $z\in \cX_0$ and  $y\in \cX_1$ such that $z> y$. 

    In the first iteration, the procedure defines $F_1=\vee_{a\in \Min(\cX_1)}M_a$ and outputs $h=F_1$.
    If $z\in \cX_1$, then there is $a\le z$ such that $a\in\Min(\cX_1)$. Thus, $M_a(z)=1$ and consequently $h(z)=1$.
    If $z\in \cX_0$, there is no $y\in \cX_1$ such that $z>y$. Therefore, $M_a(z)=0$ for all $a\in \Min(\cX_1)$, and consequently $h(z)=0$.   

    Assume the statement is true for $(d-1)$-monotone functions. We now prove it for $d$-monotone functions. Let $f$ be a $d$-monotone function. In the first iteration of the procedure, it defines $\mathcal{S}_0 = \mathcal{X}_0$, $ \mathcal{S}_1 = \mathcal{X}_1$, $W_1=\Min(\cS_1)$, $F_1(x)=\vee_{a\in W_1}M_a(x)$, and $W_0=\{x\in \cS_0| F_1(x)=0\}$.  
    After the first iteration, it runs with the new points $\cS_1':=\cS_0\backslash W_0$ and $\cS_0':=\cS_1\cup W_0$.
    
    We first show that there is a $(d-1)$-monotone function $g$ such that $g(x)=0$ for all $x\in \cS_0'=\cS_1\cup W_0$ and $g(x)=1$ for all $x\in \cS_1'=\cS_0\backslash W_0$.

    Assume to the contrary that any function $g$ that is $0$ in $\cS_0'=\cS_1\cup W_0$ and $1$ in $\cS_1'=\cS_0\backslash W_0$ is $d'$-monotone for some $d'\ge d$, and is not $(d-1)$-monotone. 
    Let $\perp<x_1<x_2<\cdots<x_t$ be any chain where the function $g$ changes its value $d$ times. 
    Suppose the changes happen in $x_{i_1}<x_{i_2}<\cdots<x_{i_d}$. Since $g(\perp)=0$, we have $g(x_{i_1})=1$ and $g(x_{i_j})=(j\mod 2)$. Since $g(x_{i_1})=1$, we have $x_{i_1}\in \cS_1'= \cS_0\backslash W_0$. 
    Therefore $f(x_{i_1})=0$. Since $x_{i_1}\in \cS_0$ and $x_{i_1}\not\in W_0$, we have $F_1(x_{i_1})=1$, and therefore, there is $x_{0}\le x_{i_1}$ such that $x_{0}\in W_1=\Min(\cS_1)$. 
    In particular, $f(x_0)=1$. Since $f(x_0)=1$ and $f(x_{i_1})=0$, we have $x_0\not=x_{i_1}$ and therefore $x_0<x_{i_1}$. 
    
    Let $j\ge 2$. Since $x_{i_j}>x_{i_1}>x_{0}$, we have $F_1(x_{i_j})=1$ and therefore $x_{i_j}\not\in W_0$. Thus, $g(x_{i_j})=\overline{f(x_{i_j})}$ and $f(x_{i_j})=\overline{g(x_{i_j})}=(j-1\mod 2)$ for all $j\ge 2$.
    Hence, $\perp<x_0<x_{i_1}<x_{i_2}<\cdots<x_{i_d}$ is a chain for which $f$ changes its value along it $(d+1)$ times. This implies that $f$ is $d''$-monotone for some $d''\ge d+1$, which is a contradiction. 

    Now, by the induction hypothesis, $g=F_2\oplus F_3\oplus \cdots\oplus F_d$ satisfies $g(x)=0$ for every $x\in \cS_1\cup W_0$ and $g(x)=1$ for every $x\in \cS_0\backslash W_0$. We now show that $h=F_1\oplus g$ is the desired hypothesis. By the definition of $W_0$, if $x\in \cS_0\backslash W_0$, then $F_1(x)=1$ and $g(x)=1$, and therefore $h(x)=0$. If $x\in W_0$, then $F_1(x)=0$ and $g(x)=0$, and therefore $h(x)=0$. If $x\in \cS_1$, then $F_1(x)=1$ and $g(x)=0$, and therefore $h(x)=1$. \qed
\end{proof}

\begin{algorithm}
\caption{{\bf Consistent}$(d,\cX_0,\cX_1)$}\label{Algo1n}
\begin{algorithmic}[1]
    \STATE Let $\mathcal{S}_0 = \mathcal{X}_0; \mathcal{S}_1 = \mathcal{X}_1$.
    \FOR{$i = 1$ to $d$}
        \STATE Let $W_1 \gets \Min (\mathcal{S}_1)$.
        \STATE\label{mmm} Define $F_i = \bigvee_{a \in W_1} M_a$ \ \ \ \ $\backslash *$If $W_1=\emptyset$ then $F_i=0$
        \STATE $W_0\gets \{x \in \mathcal{S}_{0} \mid F_i(x) = 0\}$
        \STATE $\cS_1\gets (\cS_0\backslash W_0).$
        \STATE $\cS_0\gets \cS_1\cup W_0.$
    \ENDFOR
    \STATE Output $h = F_1 \oplus F_2 \oplus \cdots \oplus F_d.$
\end{algorithmic}
\end{algorithm}

In~\cite{TakimotoSM00} (page 16), Takimoto et al. claim that if $f=g_1\oplus g_2\oplus\cdots\oplus g_d$, where $g_i$ is monotone for every $i\le d$, $g_{i+1}\not=g_i$, and $g_{i+1}\Rightarrow g_i$ for every $i\le d-1$, then $g_i=\cM(f_i)$. In the appendix, we show that this claim is not entirely accurate. The following lemma outlines the conditions under which this statement holds.

\begin{lemma}\label{GGG}
    If $f=g_1\oplus \cdots\oplus g_d$, where $g_i$ is monotone function for every $i\le d$, $g_{i+1}\Rightarrow g_i$ and $\Min(g_{i+1})\cap \Min(g_i)=\emptyset$ for every $i\le d-1$, then $\cM(f_i)=g_i$.
\end{lemma}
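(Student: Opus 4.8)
The plan is to reduce the entire statement to a single auxiliary fact about the monotone closure of an alternating XOR, and then run a short induction that peels off one $g_i$ at a time.

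First I would isolate the following key claim: if $h=h_1\oplus h_2\oplus\cdots\oplus h_k$ with each $h_j$ monotone, $h_{j+1}\Rightarrow h_j$, and $\Min(h_{j+1})\cap\Min(h_j)=\emptyset$ for all $j$, then $\cM(h)=h_1$. I prove the two implications separately. For $\cM(h)\Rightarrow h_1$: the chain $h_k\Rightarrow\cdots\Rightarrow h_1$ shows that $h_1(x)=0$ forces $h_j(x)=0$ for every $j$, hence $h(x)=0$; thus $h\Rightarrow h_1$, and since $h_1$ is monotone, minimality of the monotone closure gives $\cM(h)\Rightarrow h_1$. For $h_1\Rightarrow\cM(h)$: I take any $a\in\Min(h_1)$ and show $h(a)=1$. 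The heart of the matter is to show $h_2(a)=0$: if instead $h_2(a)=1$, then since $a\notin\Min(h_2)$ (by disjointness of minimal elements), the representation $h_2=\bigvee_{c\in\Min(h_2)}M_c$ gives a global minimal element $c\in\Min(h_2)$ with $c<a$; but $h_2\Rightarrow h_1$ then yields $h_1(c)=1$, contradicting $a\in\Min(h_1)$. Hence $h_2(a)=0$, and the chain $h_j\Rightarrow h_2$ forces $h_j(a)=0$ for all $j\ge 2$, so $h(a)=h_1(a)=1$ and therefore $\cM(h)(a)=1$. Since $h_1$ is the disjunction of the monotone terms generated by $\Min(h_1)$, this gives $h_1\Rightarrow\cM(h)$, and the two directions combine to $\cM(h)=h_1$.

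With the key claim in hand, I would prove by induction on $i$ the combined statement that $f_i=g_i\oplus g_{i+1}\oplus\cdots\oplus g_d$ and $\cM(f_i)=g_i$. The base case $i=1$ holds since $f_1=f=g_1\oplus\cdots\oplus g_d$ by definition of $f_1$, and $\cM(f_1)=g_1$ is exactly the key claim applied to $h_j=g_j$. For the inductive step, assuming $f_i=g_i\oplus\cdots\oplus g_d$ and $\cM(f_i)=g_i$, the defining recursion $f_{i+1}=f_i\oplus\cM(f_i)$ gives $f_{i+1}=g_{i+1}\oplus g_{i+2}\oplus\cdots\oplus g_d$; the tail $g_{i+1},\ldots,g_d$ inherits all three structural hypotheses, so the key claim yields $\cM(f_{i+1})=g_{i+1}$, closing the induction and proving the lemma.

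The main obstacle is the step $h_2(a)=0$ inside the key claim, and this is precisely where the hypothesis $\Min(g_{i+1})\cap\Min(g_i)=\emptyset$ is indispensable: it is exactly the condition absent from the inaccurate Takimoto et al. claim. Without disjointness of minimal elements one cannot rule out $h_2(a)=1$ at a minimal element $a$ of $h_1$, in which case $h(a)$ need not equal $h_1(a)$, and $\cM(h)$ can strictly differ from $h_1$. Everything else — the two monotonicity/implication directions and the bookkeeping $f_i=g_i\oplus\cdots\oplus g_d$ — is routine once the key claim is secured.
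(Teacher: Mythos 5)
Your proof is correct and follows essentially the same route as the paper's: reduce everything to the single claim $\cM(f_1)=g_1$, establish it by proving the two implications, and then peel off one $g_i$ per induction step via $f_{i+1}=f_i\oplus\cM(f_i)$. The one place you improve on the paper is the direction $\cM(h)\Rightarrow h_1$: you get $h\Rightarrow h_1$ immediately from the implication chain and invoke the minimality of the monotone closure, whereas the paper proves $\Min(f)\subseteq\Min(g_1)$ by a two-step descent (through $a'\in\Min(g_2)$ and $a''\in\Min(g_1)$) that also consumes the disjointness hypothesis — your version makes it clear that $\Min(g_{i+1})\cap\Min(g_i)=\emptyset$ is needed only for the converse inclusion $h_1\Rightarrow\cM(h)$.
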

\begin{proof}
    It is enough to prove that $\cM(f_1)=g_1$. This is because if we prove that $\cM(f_1)=g_1$, then
    $$f_2=f_1\oplus\cM(f_1)=f\oplus \cM(f_1)=(g_1\oplus g_2\oplus\cdots\oplus g_d)\oplus g_1=g_2\oplus\cdots\oplus g_d,$$ and therefore $\cM(f_2)=g_2$. Then, by induction, the result follows.

    Recall that $f_1=f$. We first prove that $\cM(f)\Rightarrow g_1$. We show that $\Min(\cM(f))\subset \Min(g_1)$. Let $a\in \Min(\cM(f))=\Min(f)$. 
    Then $f(a)=1$ and for every $b<a$, we have $f(b)=0$. We now show that $g_1(a)=1$ and $g_i(a)=0$ for all $i>1$. If $g_i(a)=0$ for all $i$, then $f(a)=0$, and we get a contradiction. 
    
    If $g_i(a)=1$ for some $i>1$, then $g_2(a)=1$ and there is $a'\in \Min(g_2)$, $a'\le a$, such that $g_2(a')=1$. Then $g_1(a')=1$, and since $\Min(g_1)\cap\Min(g_2)=\emptyset$, there is a $a''\in \Min(g_1)$ such that $a''<a'$ and $g_1(a'')=1$. Since $a''<a'$ and $a'\in \Min(g_2)$, we have $g_2(a'')=0$ and therefore $g_i(a'')=0$ for all $i>1$. Therefore, $f(a'')=g_1(a'')=1$. 
    Since $a''<a'\le a\in \Min(f)$, we have $f(a'')=0$, which is a contradiction. 
    Therefore $g_1(a)=1$ and $g_i(a)=0$ for all $i>1$. Since for every $b<a$, $f(b)=0$, we have for every $b<a$, $g_i(b)=0$ for all $i$. This implies that $a\in \Min(g_1)$. 

    We now prove that $g_1\Rightarrow \cM(f)$. Let $a\in \Min(g_1)$. Then $g_1(a)=1$ and for every $b<a$, we have $g_1(b)=0$. Therefore, for every $b<a$ and every $i>1$, we have $g_i(b)=0$. If $g_i(a)=1$ for some $i>1$, then $g_2(a)=1$. Then $a\in \Min(g_2)$, and since $\Min(g_1)\cap\Min(g_2)=\emptyset$, we get a contradiction. Therefore, $g_1(a)=1$, $g_i(a)=0$ for all $i>1$ and for every $b<a$, $g_j(b)=0$ for all $j\ge 1$. Therefore, $f(a)=1$ and for every $b<a$, $f(b)=0$. Thus, $a\in \Min(f)=\Min(\cM(f))$. \qed
\end{proof}

The following lemma proves that the output $F_1\oplus \cdots\oplus F_d$ of the procedure \textbf{Consistent} is the strict monotone representation of $h$.
\begin{lemma}\label{FiMhi}
     Let $\cX_0,\cX_1\subseteq \cX$. Suppose there is a $d$-monotone function $f$ such that $f(x)=0$ for all $x\in \cX_0$ and $f(x)=1$ for all $x\in \cX_1$. Let $h=F_1\oplus\cdots\oplus F_d$ be the output of \textbf{Consistent}$(d,\cX_0,\cX_1)$. Then $F_i=\cM(h_i)$.
\end{lemma}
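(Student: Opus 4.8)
The plan is to verify that the output $h = F_1 \oplus \cdots \oplus F_d$ of \textbf{Consistent} satisfies exactly the hypotheses of Lemma~\ref{GGG}, and then invoke that lemma to conclude $F_i = \cM(h_i)$. Since Lemma~\ref{GGG} states that whenever $h = g_1 \oplus \cdots \oplus g_d$ with each $g_i$ monotone, $g_{i+1} \Rightarrow g_i$, and $\Min(g_{i+1}) \cap \Min(g_i) = \emptyset$, we have $\cM(h_i) = g_i$, it suffices to establish the three structural properties for the $F_i$. First, each $F_i = \bigvee_{a \in W_1} M_a$ is by construction a disjunction of monotone terms, hence monotone.

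\smallskip

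The heart of the argument is the two remaining conditions, and these should be read off from how the sets $\cS_0, \cS_1$ evolve across iterations. At iteration $i$, the procedure sets $W_1 = \Min(\cS_1)$, defines $F_i$ as the disjunction of the terms generated by $W_1$, and then updates $\cS_1 \gets \cS_0 \setminus W_0$ and $\cS_0 \gets \cS_1 \cup W_0$, where $W_0 = \{x \in \cS_0 \mid F_i(x) = 0\}$. I would denote by $\cS_1^{(i)}$ the value of $\cS_1$ at the start of iteration $i$, so that $F_i$ is generated by $\Min(\cS_1^{(i)})$ and hence $\Min(F_i) = \Min(\cS_1^{(i)})$. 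The implication $F_{i+1} \Rightarrow F_i$ should follow because every point of $\cS_1^{(i+1)}$ (the new positive set at the next stage) lies below or equals a point where $F_i$ already evaluates to $1$: the next-stage positives are exactly the old $\cS_0 \setminus W_0$, i.e.\ the old negatives on which $F_i(x) = 1$, so the minimal elements generating $F_{i+1}$ are all $\ge$ some minimal element generating $F_i$. Concretely, if $a \in \Min(F_{i+1}) = \Min(\cS_1^{(i+1)})$, then $a \in \cS_0^{(i)} \setminus W_0$, so $F_i(a) = 1$, giving $F_{i+1} \Rightarrow F_i$ by Lemma~\ref{monotoneD}.

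\smallskip

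For the disjointness condition $\Min(F_{i+1}) \cap \Min(F_i) = \emptyset$, I would argue that a minimal element $a$ of $F_{i+1}$ cannot also be minimal for $F_i$: since $a \in \cS_0^{(i)}$ is one of the points that $F_i$ labels $1$ but that was carried forward as a \emph{positive} at stage $i+1$, it was not itself a generator of $F_i$ (those generators came from $\Min(\cS_1^{(i)})$, which are the stage-$i$ positives, disjoint in role from the stage-$i$ negatives on which $F_i = 1$). The cleanest way to see this is that $a \in \Min(F_i)$ would force $a \in \cS_1^{(i)}$, whereas $a \in \Min(F_{i+1})$ forces $a \in \cS_0^{(i)} \setminus W_0$; these two memberships are incompatible once one tracks that the stage-$i$ positive and negative sets are kept separate. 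I would need to confirm this incompatibility carefully by unwinding the index bookkeeping.

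\smallskip

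The main obstacle I anticipate is precisely this index bookkeeping: the assignments in lines 6--7 of Algorithm~\ref{Algo1n} reuse the names $\cS_0, \cS_1$, and the update $\cS_1 \gets \cS_0 \setminus W_0$ followed by $\cS_0 \gets \cS_1 \cup W_0$ must be read with care to determine whether the second assignment uses the \emph{new} or the \emph{old} value of $\cS_1$. Getting the disjointness claim right hinges on correctly identifying which points belong to $\Min(\cS_1^{(i)})$ versus $\Min(\cS_1^{(i+1)})$, and on using the fact (already available from the proof of Lemma~\ref{PROM}) that at each stage there is a genuine $(d-i+1)$-monotone witness, so that the minimal-element sets behave as in a valid strict monotone decomposition. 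Once the three hypotheses of Lemma~\ref{GGG} are confirmed, the conclusion $F_i = \cM(h_i)$ is immediate.
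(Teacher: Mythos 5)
Your proposal is correct and follows essentially the same route as the paper: verify that the $F_i$ produced by \textbf{Consistent} satisfy the three hypotheses of Lemma~\ref{GGG} (monotonicity by construction, $F_{i+1}\Rightarrow F_i$ because the generators of $F_{i+1}$ are points of the previous negative set on which $F_i=1$, and disjointness of $\Min(F_i)$ and $\Min(F_{i+1})$ because they live in the stage-$i$ positive and negative sets respectively, which remain disjoint). The bookkeeping worry you raise resolves exactly as you suspect — the updates in lines 6--7 are a simultaneous swap using the old values, and the initial disjointness of $\cX_0$ and $\cX_1$ propagates inductively — which is precisely what the paper's proof uses for the base case $F_1,F_2$ before invoking induction.
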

\begin{proof}
    We use Lemma~\ref{GGG}. By step~\ref{mmm} in the procedure \textbf{Consistent}, we have that each $F_i$ is a monotone function. Now, it is enough to prove that $\Min(F_1)\cap \Min(F_2)=\emptyset$ and $F_2\Rightarrow F_1$. Then, the result follows by induction. 

    Since $\Min(F_1)=\Min(\cS_1)=\Min(\cX_1)\subseteq \cX_1$ and $\Min(F_2)=\Min(\cS_0\backslash W_0)\subseteq \cX_0$, we have $\Min(F_1)\cap \Min(F_2)=\emptyset$.

    Now if $F_2(z)=1$, then since $F_2=\vee_{a\in \Min(\cS_0\backslash W_0)}M_a$ and $\Min(\cS_0\backslash W_0)=\Min(\cX_0\backslash \{x\in \cX_0|F_1(x)=0\})$, there is an $a\in \cX_0\backslash \{x\in \cX_0|F_1(x)=0\}$ such that $a\le z$. Then $F_1(a)=1$ and since $F_1$ monotone and $z\ge a$, we have $F_1(z)=1$. Therefore, $F_2\Rightarrow F_1$.\qed
\end{proof}

\subsection{The Main Algorithm}
In this section, we present the algorithm and prove Theorem~\ref{THEO1} and Corollary~\ref{Coroll}.  

We first prove two lemmas needed to establish the correctness and determine the complexity of the algorithm. The first is:
\begin{lemma}\label{Bin}
    Let $g_1,\ldots,g_d$ be monotone functions. Let $h$ be a monotone function such that 
    \begin{eqnarray}\label{Bin1}
     \Min(h)\subseteq \bigcup_{J\subseteq [d]}\bigvee_{i\in J}\Min(g_i).   
    \end{eqnarray}
    For any $I\subseteq [d]$, we have
    $$\Min\left(h\wedge\bigwedge_{i\in I}g_i\right) \subseteq \bigcup_{J\subseteq [d]}\bigvee_{i\in J}\Min(g_i).$$
\end{lemma}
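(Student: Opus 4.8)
The plan is to fix an arbitrary minimal element $a\in\Min\left(h\wedge\bigwedge_{i\in I}g_i\right)$ and show directly that $a$ can be written as a join of exactly one minimal element from each $g_i$ over some index set $J\subseteq[d]$, i.e. that $a\in\bigvee_{i\in J}\Min(g_i)$. Since $a$ is minimal with $\left(h\wedge\bigwedge_{i\in I}g_i\right)(a)=1$, in particular $h(a)=1$; as $h$ is monotone there is a minimal element $b\in\Min(h)$ with $b\le a$. By hypothesis~(\ref{Bin1}) we have $b\in\bigvee_{i\in K}\Min(g_i)$ for some $K\subseteq[d]$, so I can write $b=\bigvee_{i\in K}b_i$ with $b_i\in\Min(g_i)$ and each $b_i\le a$.

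Next I would supplement $b$ with fresh witnesses, but only for the indices of $I$ not already covered by $K$. For each $i\in I\setminus K$ we have $g_i(a)=1$, so by monotonicity there is $c_i\in\Min(g_i)$ with $c_i\le a$. Set $\tilde a=b\vee\bigvee_{i\in I\setminus K}c_i=\bigvee_{i\in K}b_i\vee\bigvee_{i\in I\setminus K}c_i$. The point of restricting the fresh elements to $I\setminus K$ is that $\tilde a$ is now a join of precisely one minimal element of $g_i$ for each index $i$ in $K\cup I$ (no index used twice), so $\tilde a\in\bigvee_{i\in K\cup I}\Min(g_i)\subseteq\bigcup_{J\subseteq[d]}\bigvee_{i\in J}\Min(g_i)$, which is exactly the target set.

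It then remains to show $\tilde a=a$, and this is where minimality of $a$ is used. Clearly $\tilde a\le a$ since every $b_i$ and $c_i$ lies below $a$. Moreover $\left(h\wedge\bigwedge_{i\in I}g_i\right)(\tilde a)=1$: we have $h(\tilde a)=1$ because $b\le\tilde a$ and $h(b)=1$; and for each $i\in I$ either $i\in K$, in which case $b_i\le\tilde a$ forces $g_i(\tilde a)=1$, or $i\in I\setminus K$, in which case $c_i\le\tilde a$ forces $g_i(\tilde a)=1$. Since $a$ is minimal for $h\wedge\bigwedge_{i\in I}g_i$ and $\tilde a\le a$ still satisfies this function, minimality gives $\tilde a=a$, completing the argument.

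I expect the main obstacle to be exactly the overlap $K\cap I$. The tempting route, namely invoking item~\ref{MinORAND2} of Lemma~\ref{MinORAND} to write $a=b\vee\bigvee_{i\in I}c_i$ over all of $I$, fails because for $i\in K\cap I$ it produces the join $b_i\vee c_i$ of two minimal elements of $g_i$, which need not itself be minimal in $g_i$ and hence need not lie in the target set. Adding witnesses only for $i\in I\setminus K$, and relying on $b_i$ to certify $g_i(\tilde a)=1$ when $i\in K\cap I$, sidesteps this, and the minimality of $a$ then upgrades the easy inequality $\tilde a\le a$ to the equality $\tilde a=a$.
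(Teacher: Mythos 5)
Your proof is correct, and it takes a genuinely different route from the paper's. The paper argues symbolically: it writes $h=\bigvee_{\ell}M_{u_\ell}$ over $\Min(h)$, uses item~\ref{MinORAND1} of Lemma~\ref{MinORAND} to localize $a$ to a single conjunct $M_{u_\ell}\wedge\bigwedge_{i\in I}g_i$, expands $M_{u_\ell}=\bigwedge_{j\in J_\ell}M_{u_{\ell,j}}$ via item~\ref{MinORAND3}, absorbs the overlapping factors through $M_{u_{\ell,j}}\wedge g_j=M_{u_{\ell,j}}$, and only then applies item~\ref{MinORAND2} to place $a$ in $\bigvee_{j\in I\cup J_\ell}\Min(g_j)$. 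You instead work directly in the lattice: you build the candidate $\tilde a=\bigvee_{i\in K}b_i\vee\bigvee_{i\in I\setminus K}c_i\le a$ by hand, verify it still satisfies $h\wedge\bigwedge_{i\in I}g_i$, and invoke minimality of $a$ to force $\tilde a=a$. This is essentially the same mechanism the paper uses inside its proof of item~\ref{MinORAND2}, but you inline it, which makes your argument self-contained and arguably more transparent; the paper's version buys reuse of Lemma~\ref{MinORAND} at the cost of the somewhat opaque absorption step (and a typo, $I\Delta J_\ell$ where $I\setminus J_\ell$ is meant). You also correctly diagnose the one real subtlety --- the overlap $K\cap I$, where a blind application of item~\ref{MinORAND2} would produce a join of two minimal elements of the same $g_i$ --- and your fix (add fresh witnesses only for $i\in I\setminus K$, letting $b_i$ certify $g_i(\tilde a)=1$ for $i\in K\cap I$) is exactly the element-level counterpart of the paper's absorption identity. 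Both proofs land on the same index set, $K\cup I$ for you and $J_\ell\cup I$ for the paper.
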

\begin{proof}
Let $a\in \cX$. Recall that $M_a:\cX\to\{0,1\}$ is the function that $M_a(x)=1$ if and only if $x\ge a$. 

Let $a$ be a minimal element of $h\wedge \wedge_{i\in I}g_i$. Let $\Min(h)=\{u_1,\ldots\al,u_t\}$. 
    Then, $h=M_{u_1}\vee M_{u_2}\vee \cdots \vee M_{u_t}$ and 
    $$h\wedge\wedge_{i\in I}g_i= (M_{u_1}\wedge\wedge_{i\in I}g_i)\vee (M_{u_2}\wedge\wedge_{i\in I}g_i)\vee \cdots \vee (M_{u_t}\wedge\wedge_{i\in I}g_j).$$
    By item~\ref{MinORAND1} Lemma~\ref{MinORAND}, $a$ is a minimal element of some $M_{u_\ell}\wedge \wedge_{i\in I}g_i$.
    
    Now, by (\ref{Bin1}), there is $J_\ell\subseteq [d]$ such that $u_\ell=\vee_{j\in J_\ell}u_{\ell,j}$ where $u_{\ell,j}\in \Min(g_j)$. 
    Therefore, by item~\ref{MinORAND3} in Lemma~\ref{MinORAND}, $M_{u_\ell}=\wedge_{j\in J_\ell}M_{u_{\ell,j}}$ where $M_{u_{\ell,j}}$ is a minterm in $g_j$. Since $M_{u_{\ell,j}}\Rightarrow g_j$, $M_{u_{\ell,j}}\wedge g_j=M_{u_{\ell,j}}$. 
    Therefore, $M_{u_\ell}\wedge \wedge_{i\in I}g_i=\wedge_{j\in J_\ell}M_{u_{\ell,j}}\wedge \wedge_{i\in I\Delta J_\ell}g_i$. 
    
    Thus, by item~\ref{MinORAND2} in Lemma~\ref{MinORAND},
    $$a\in \Min(\wedge_{j\in J_\ell}M_{u_{\ell,j}}\wedge \wedge_{i\in I\Delta J_\ell}g_i)\subseteq \bigvee_{j\in I\cup J_\ell}\Min(g_j).$$ \qed   
\end{proof}

The second lemma is given below.
\begin{lemma}\label{FTAlg}
    Let $f=F(g_1,\ldots,g_d)$ where $F:\{0,1\}^d\to \{0,1\}$ and $g_1,\ldots,g_d$ are monotone functions. Let $h$ be a $d$-monotone function such that 
    \begin{eqnarray}\label{Lasst}
        \bigcup_{i=1}^d \Min(\cM(h_i))\subseteq \bigcup_{J\subseteq [d]}\bigvee_{j\in J}\Min(g_j).
    \end{eqnarray}
    Then
    $$\min(f\oplus h)\subseteq \left(\bigcup_{J\subseteq [d]}\bigvee_{j\in J} \Min(g_j)\right)
    $$
\end{lemma}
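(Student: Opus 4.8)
The plan is to rewrite $f\oplus h$ as a single Boolean combination of $2d$ monotone functions and then apply the structural Lemmas~\ref{BaseF}, \ref{Fi}, and~\ref{Bin}. Assuming $F(0^d)=0$ as we may (see the remark following Lemma~\ref{Rep01}), let $h=\cM(h_1)\oplus\cdots\oplus\cM(h_d)$ be the strict monotone representation of the $d$-monotone function $h$ (Lemma~\ref{MaimM}), and set $p_j=g_j$ for $j\in[d]$ and $p_{d+k}=\cM(h_k)$ for $k\in[d]$. Then $f\oplus h=G(p_1,\ldots,p_{2d})$, where $G(y_1,\ldots,y_d,z_1,\ldots,z_d)=F(y_1,\ldots,y_d)\oplus z_1\oplus\cdots\oplus z_d$ is a Boolean function on $2d$ variables with $G(0^{2d})=F(0^d)=0$. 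Each $p_i$ is monotone, so Lemma~\ref{BaseF} applies and gives
$$\min(f\oplus h)\subseteq \bigcup_{I\subseteq[2d]}\Min\Bigl(\bigwedge_{i\in I}p_i\Bigr).$$
It therefore suffices to show that $\Min(\bigwedge_{i\in I}p_i)\subseteq \bigcup_{J\subseteq[d]}\bigvee_{j\in J}\Min(g_j)$ for every $I\subseteq[2d]$.

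Next I would split each index set $I$ into its \emph{$g$-part} $I_g=I\cap[d]$ and its \emph{$\cM(h)$-part} $I_h=\{k\in[d]:d+k\in I\}$, so that $\bigwedge_{i\in I}p_i=\bigl(\bigwedge_{j\in I_g}g_j\bigr)\wedge\bigl(\bigwedge_{k\in I_h}\cM(h_k)\bigr)$. The key simplification is that, by item~\ref{Fi1} of Lemma~\ref{Fi}, the functions $\cM(h_k)$ form a chain $\cM(h_d)\Rightarrow\cdots\Rightarrow\cM(h_1)$, so whenever $I_h\neq\emptyset$ their conjunction collapses to a single function $\bigwedge_{k\in I_h}\cM(h_k)=\cM(h_{k^*})$ with $k^*=\max I_h$. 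By hypothesis~(\ref{Lasst}), $\Min(\cM(h_{k^*}))\subseteq\bigcup_{i=1}^d\Min(\cM(h_i))\subseteq\bigcup_{J\subseteq[d]}\bigvee_{j\in J}\Min(g_j)$, so $\cM(h_{k^*})$ is a monotone function whose minimal elements already lie in the target set. Lemma~\ref{Bin}, applied with this $\cM(h_{k^*})$ in the role of $h$ and with index set $I_g$, then yields $\Min(\cM(h_{k^*})\wedge\bigwedge_{j\in I_g}g_j)\subseteq\bigcup_{J\subseteq[d]}\bigvee_{j\in J}\Min(g_j)$, as required. In the remaining case $I_h=\emptyset$ the conjunction is just $\bigwedge_{j\in I_g}g_j$, and the inclusion follows directly from item~\ref{MinORAND2} of Lemma~\ref{MinORAND}. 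Taking the union over all $I\subseteq[2d]$ completes the argument.

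The main obstacle I anticipate is the bookkeeping in the middle step: one must recognize that the conjunction of several $\cM(h_k)$'s is not handled directly by Lemma~\ref{Bin} (whose hypothesis concerns a single monotone function with minimal elements in the target set), and that the implication chain of Lemma~\ref{Fi} is exactly what is needed to reduce such a conjunction to one $\cM(h_{k^*})$ before invoking Lemma~\ref{Bin}. A minor point to verify is that the assumption $F(0^d)=0$ propagates to $G(0^{2d})=0$, which is what makes Lemma~\ref{BaseF} applicable to $f\oplus h$; the empty-index cases ($I_g=\emptyset$ or $I_h=\emptyset$) should be checked to fall under the conventions already used for empty meets and joins in the preceding lemmas.
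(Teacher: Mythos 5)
Your proof is correct and follows essentially the same route as the paper's: both arguments reduce the claim to showing that each local minimal element of $f\oplus h$ is a (global) minimal element of a conjunction of the form $\cM(h_{k^*})\wedge\bigwedge_{j\in I_g}g_j$ and then finish with Lemma~\ref{Bin} together with hypothesis~(\ref{Lasst}). The only cosmetic difference is that you reach this point by a black-box application of Lemma~\ref{BaseF} to the $2d$-variable function $G$ followed by the chain collapse from item~\ref{Fi1} of Lemma~\ref{Fi}, whereas the paper inlines that same local-minimal-element analysis directly; your explicit use of $F(0^d)=0$ makes precise an assumption the paper leaves implicit.
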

\begin{proof}
    Consider
    $$G=f\oplus h=F(g_1,\ldots,g_d)\oplus \cM(h_1)\oplus\cdots\oplus\cM(h_d).$$
    Let $a\in \min(G)$ be a local minimal element of $G$. Then $G(a)=1$ and for every immediate predecessor $b$ of $a$ we have $G(b)=0$. Suppose $g_i(a)=1$ for all $i\in I$, $g_i(a)=0$ for all $i\not\in I$, $\cM(h_1)(a)=\cdots=\cM(h_\ell)(a)=1$, and $\cM(h_{\ell+1})(a)=\cdots=\cM(h_d)(a)=0$. 
    Since $g_i$ and $\cM(h_j)$ are monotone functions, for every immediate predecessor $b$ of $a$ we have $g_i(b)=0$ for all $i\not\in I$ and $\cM(h_{\ell+1})(b)=\cdots=\cM(h_d)(b)=0$. 
    Since $f(a)\not=f(b)$, either $\cM(h_\ell)(b)=0$ or $g_i(b)=0$ for some $i\in I$. 
    Therefore, $a$ is a local minimal element of $H:=\cM(h_\ell)\wedge \wedge_{i\in I}g_i$ for some $\ell\in [d]$ and $I\subseteq [d]$. 
    Since $H$ is monotone, $\min(H)=\Min(H)$ and therefore
    \begin{eqnarray}\label{Lasst2}
        a\in \Min\left(\cM(h_\ell)\wedge \bigwedge_{i\in I}g_i\right).
    \end{eqnarray}
    By (\ref{Lasst}), (\ref{Lasst2}) and Lemma~\ref{Bin}.    $$a\in \bigcup_{J\subseteq [d]}\bigvee_{j\in J} \Min(g_j).$$\qed
\end{proof}

We now give the proof of the main Theorem. Consider the algorithm \textbf{Learn $d$-Monotone} in Algorithm~\ref{Alg2}. The following proves Theorem~\ref{THEO1}.

\begin{algorithm}
\caption{\textbf{Learn $d$-Monotone}}\label{LearndM}
\begin{algorithmic}[1]
    \STATE $\mathcal{X}_0 = \mathcal{X}_1 = \emptyset$
    \STATE $h\gets 0$
    \WHILE{$\text{EQ}(h) \neq \text{YES}$}
        \STATE Let $a$ be a counterexample
        \WHILE{there is an immediate predecessor $b$ of $a$ such that $h(b) \neq f(b)$}\label{step5}
            \STATE $a \gets b$
        \ENDWHILE
        \IF{$f(a) = 1$}
            \STATE $\mathcal{X}_1 \gets \mathcal{X}_1 \cup \{a\}$
        \ELSE
            \STATE $\mathcal{X}_0 \gets \mathcal{X}_0 \cup \{a\}$
        \ENDIF
            \STATE $h \gets \textbf{Consistent}(d, \mathcal{X}_0, \mathcal{X}_1)$
    \ENDWHILE
    \STATE Output $h$
\end{algorithmic}\label{Alg2}
\end{algorithm}

\begin{theorem}
    Algorithm {\bf Learn $d$-Monotone} learns $d$-monotone functions $f$ with at most $R(f)$ equivalence queries and $R(f)\sigma(\cX)$ membership queries, where
    $$R(f)= \left|\bigcup_{I\subseteq [d]}\bigvee_{i\in I}\Min(g_i) \right|\le \left(\frac{\size(f)}{d}+1\right)^d.$$
\end{theorem}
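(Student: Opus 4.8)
The plan is to combine three ingredients: a correctness-and-termination argument, a structural invariant controlling which lattice points can ever enter $\cX_0\cup\cX_1$, and elementary counting for the numerical bound. The key technical fact to exploit is that the descent on line~\ref{step5} converts an arbitrary counterexample into a \emph{local} minimum of $f\oplus h$, which Lemma~\ref{FTAlg} then confines to $\bigcup_{I\subseteq[d]}\bigvee_{i\in I}\Min(g_i)$.

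First I would settle correctness. Since $f$ is itself a $d$-monotone function consistent with $f$ on $\cX_0\cup\cX_1$, the precondition of Lemma~\ref{PROM} always holds, so every call to \textbf{Consistent} returns a legal $d$-monotone hypothesis $h$ that agrees with $f$ on $\cX_0\cup\cX_1$. When the inner loop of line~\ref{step5} halts at a point $a$, we have $(f\oplus h)(a)=1$ while $(f\oplus h)(b)=0$ for every immediate predecessor $b$ of $a$; that is, $a\in\min(f\oplus h)$. Moreover $a\notin\cX_0\cup\cX_1$, because $h$ already agrees with $f$ on those points. Hence each equivalence query that returns a counterexample contributes a \emph{new, distinct} point, so the main loop iterates only finitely often and terminates with $\text{EQ}(h)=\text{YES}$, i.e. $h\equiv f$.

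The heart of the argument is the invariant that every point ever placed in $\cX_0\cup\cX_1$ lies in $\bigcup_{I\subseteq[d]}\bigvee_{i\in I}\Min(g_i)$. I would prove this by induction on the iterations, simultaneously maintaining the hypothesis of Lemma~\ref{FTAlg}, namely $\bigcup_{i=1}^d\Min(\cM(h_i))\subseteq\bigcup_{J\subseteq[d]}\bigvee_{j\in J}\Min(g_j)$. For the base case $h=0$ all $\cM(h_i)=0$, so the inclusion is vacuous. For the inductive step, the newly added point is the local minimum $a\in\min(f\oplus h)$ produced above, and Lemma~\ref{FTAlg}, whose hypothesis holds by induction, places it in $\bigcup_{J}\bigvee_{j\in J}\Min(g_j)$. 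To close the induction I must re-establish that hypothesis for the \emph{rebuilt} hypothesis: by Lemma~\ref{FiMhi} the output of \textbf{Consistent} is the strict monotone representation, so $\cM(h_i)=F_i$, and by construction the minimal elements of each monotone piece $F_i$ are among the input points, so $\Min(F_i)\subseteq\cX_0\cup\cX_1$, which by the invariant sits inside $\bigcup_{J}\bigvee_{j\in J}\Min(g_j)$. I expect this preservation step to be the main obstacle: the precondition of Lemma~\ref{FTAlg} refers to the \emph{current} hypothesis and must be re-verified after every rebuild, not assumed once; the bridge is precisely the identification $\cM(h_i)=F_i$ together with $\Min(F_i)\subseteq\cX_0\cup\cX_1$.

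With the invariant established the counts are immediate. The distinct points entering $\cX_0\cup\cX_1$ all lie in $\bigcup_{I\subseteq[d]}\bigvee_{i\in I}\Min(g_i)$, a set of size $R(f)$, so there are at most $R(f)$ counterexamples and hence at most $R(f)$ equivalence queries. Each counterexample triggers the descent of line~\ref{step5}, which walks down a chain and at each visited element queries $f$ on its immediate predecessors; the total number of such membership queries along one descent is at most the sum of the numbers of immediate predecessors along a chain, which is bounded by $\sigma(\cX)$, giving at most $R(f)\,\sigma(\cX)$ membership queries overall. Finally, for the numerical bound I would use $\big|\bigvee_{i\in I}\Min(g_i)\big|\le\prod_{i\in I}|\Min(g_i)|=\prod_{i\in I}\size(g_i)$, sum over $I\subseteq[d]$ to obtain $R(f)\le\prod_{i=1}^d(1+\size(g_i))$, and apply AM--GM to the $d$ numbers $\{1+\size(g_i)\}_{i=1}^d$, whose sum is $d+\size(f)$, to conclude $R(f)\le(\size(f)/d+1)^d$.
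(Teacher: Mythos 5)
Your proposal is correct and follows essentially the same route as the paper: descend to a local minimum of $f\oplus h$, confine it to $\bigcup_{I\subseteq[d]}\bigvee_{i\in I}\Min(g_i)$ via Lemma~\ref{FTAlg}, and count with AM--GM. In fact you are slightly more careful than the paper's own write-up, which invokes Lemma~\ref{FTAlg} without explicitly re-verifying its precondition $\bigcup_i\Min(\cM(h_i))\subseteq\bigcup_J\bigvee_{j\in J}\Min(g_j)$ after each rebuild; your bridge via Lemma~\ref{FiMhi} and $\Min(F_i)\subseteq\cX_0\cup\cX_1$ is exactly the missing justification (which the paper only makes explicit later, in Lemma~\ref{fFgd22}).
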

\begin{proof}
    Let $f=F(g_1,g_2,\ldots,g_d)$ be the target function. We will show by induction that at the end of iteration~$t$, the sets $\cX_0$, $\cX_1$ and the hypothesis $h$ satisfy:
    \begin{eqnarray}\label{LasstE1}
        \cX_0\cup\cX_1\subseteq \bigcup_{I\subseteq [d]}\bigvee_{i\in I}\Min(g_i)
    \end{eqnarray}
    \begin{eqnarray}\label{LasstE2}
        \mbox{For every}\ u\in\cX_0\cup\cX_1 \mbox{\ \ we have $f(u)=h(u)$}
    \end{eqnarray}
    and 
    \begin{eqnarray}\label{LasstE3}
        \left|\cX_0\cup\cX_1\right|=t.
    \end{eqnarray}
    At the first iteration, we have $h=0$. The equivalence query returns $a'$ such that $f(a')=1$. Then, the algorithm in step~\ref{step5} finds a local minimal element $a$ of $f$ and adds it to $\cX_0$ or $\cX_1$. Therefore, at the end of the first iteration, by Lemma~\ref{BaseF}, (\ref{LasstE1}) holds. By Lemma~\ref{PROM}, (\ref{LasstE2}) holds. Also, (\ref{LasstE3}) holds since $|\cX_0\cup\cX_1|=|\{a\}|=1$. 

    Now suppose (\ref{LasstE1})-(\ref{LasstE3}) hold at the end of iteration $t$. We prove that they hold at the end of iteration~$t+1$.
    
    At iteration $t+1$, if EQ$(h)$ returns a counterexample $a'$, then $f(a')\not=h(a')$ and therefore $f(a')\oplus h(a')=1$. In step~\ref{step5} of the algorithm, it continues to go down in the lattice until it finds an $a$ such that $f(a)\oplus h(a)=1$ and for every immediate predecessor $b$ of $a$, $f(b)\oplus h(b)=0$. Such an $a$ exists because $f(\perp)\oplus h(\perp)=0$. Therefore, $a\in \min(f\oplus h)$. By Lemma~\ref{FTAlg}, we have,
    $$a\in \left(\bigcup_{I\subseteq [d]}\bigvee_{i\in I} \Min(g_i)\right).$$
    By the induction hypothesis (\ref{LasstE2}), $f(u)=h(u)$ for all $u\in \cX_0\cup\cX_1$. Since $f(a)\not=h(a)$, we have $a\not\in \cX_0\cup\cX_1$ and since $a$ is added either to $\cX_0$ or $\cX_1$, at iteration $t+1$,  (\ref{LasstE1}) holds and (\ref{LasstE3}) holds at the end of iteration $t+1$. Now, (\ref{LasstE2}) also holds because $a$ is added to $\cX_1$ if $f(a)=1$ and to $\cX_0$ if $f(a)=0$ and by Lemma~\ref{PROM}, $f(u)=h(u)$ for all $u\in \cX_0\cup\cX_1\cup\{a\}$.

    This completes the proof of (\ref{LasstE1})-(\ref{LasstE3}).

    Since $\size(f)=\size(g_1)+\cdots+\size(g_d)$, and after each equivalence query, the algorithm adds an element either to $\cX_0$ or $\cX_1$, and by (\ref{LasstE1}), the number of equivalence queries is at most
    \begin{eqnarray*}
        \left|     \bigcup_{I\subseteq [d]}\bigvee_{i\in I}\Min\left(g_i\right)\right|&\le&
     \prod_{i=1}^d(\size(g_i)+1)-1\\  &\le&\left(\frac{\size(f)}{d}+1\right)^d=R(f). \ \ \ \ \ \text{AM-GM Inequality}
    \end{eqnarray*}
   After each equivalence query, the algorithm asks membership queries to go down in the lattice. The worst-case number of membership queries after each equivalence query is $\sigma(\cX)$. Therefore, the number of membership queries that the algorithm asks is at most $\sigma(\cX)R(f)$.\qed
\end{proof}

\section{Strict Monotone Representation Size}
In this section, we compare the size of the strict monotone representation of $f$ with the size of $f$ using the representation presented in this paper. We show that there exists a $d$-monotone Boolean function $f$ with $\size(f)=s$ that has size $\Omega((s/d)^d)$ in the strict monotone representation. We also show that this is a tight bound.

Throughout this section, the lattice is $\{0,1\}^n$ with the standard $\le$. 

First, by Lemma~\ref{Rep01} and Lemma~\ref{MaimM}, we have the following:
\begin{lemma}
    $f:\cX\to \{0,1\}$ is $d$-monotone if and only if $f=\cM(f_1)\oplus \cM(f_2)\oplus \cdots \oplus \cM(f_d)$. 
\end{lemma}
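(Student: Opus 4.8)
The plan is to prove the two directions of this equivalence by invoking the two earlier lemmas, since together they furnish exactly what is needed. The statement characterizes $d$-monotone functions as precisely those expressible in the strict monotone XOR form with $d$ terms, so I would split into ($\Rightarrow$) and ($\Leftarrow$).

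For the forward direction, assume $f$ is $d$-monotone. This is immediate from Lemma~\ref{MaimM}: defining $f_{i+1}=f_i\oplus\cM(f_i)$ with $f_1=f$, that lemma gives directly $f=\cM(f_1)\oplus\cM(f_2)\oplus\cdots\oplus\cM(f_d)$. So there is essentially nothing to do here beyond citing it; the construction of the $f_i$ and the verification that only $d$ terms are needed (i.e. that $\cM(f_{d+1})=0$, item~\ref{MaimM6}) were already carried out inside that lemma's proof.

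For the reverse direction, suppose $f=\cM(f_1)\oplus\cdots\oplus\cM(f_d)$ where each $\cM(f_i)$ is a monotone function. I would set $g_i=\cM(f_i)$ so that $f=g_1\oplus g_2\oplus\cdots\oplus g_d$ is an XOR of $d$ monotone functions. The goal is to show $f$ is $d$-monotone, and this is precisely what Lemma~\ref{Rep01} delivers: taking $F:\{0,1\}^d\to\{0,1\}$ to be the parity function $F(y_1,\ldots,y_d)=y_1\oplus\cdots\oplus y_d$, we have $F(0^d)=0$, so Lemma~\ref{Rep01} asserts that $F(g_1,\ldots,g_d)=g_1\oplus\cdots\oplus g_d=f$ is $d$-monotone. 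One small point of care: Lemma~\ref{Rep01} is stated for \emph{non-constant} monotone $g_i$, so if some $\cM(f_i)$ is the zero (or constant) function I would simply drop it from the XOR, reducing to a representation with fewer than $d$ monotone terms, which yields a $d'$-monotone function for $d'\le d$, hence still $d$-monotone.

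The only genuine subtlety, and the step I would watch most carefully, is the handling of the $\perp$ convention and constant terms in the reverse direction, so that the parity-function application of Lemma~\ref{Rep01} is clean; but this is a bookkeeping matter rather than a real obstacle. Since both halves reduce to direct applications of Lemma~\ref{Rep01} and Lemma~\ref{MaimM}, the proof is essentially a two-line citation, and indeed the statement is flagged in the text as following ``by Lemma~\ref{Rep01} and Lemma~\ref{MaimM}.''
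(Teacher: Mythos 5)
Your proof is correct and matches the paper's approach exactly: the paper gives no separate argument for this lemma, merely prefacing it with ``by Lemma~\ref{Rep01} and Lemma~\ref{MaimM},'' which is precisely the two-directional citation you spell out (Lemma~\ref{MaimM} for the forward direction, Lemma~\ref{Rep01} with $F=$ parity for the reverse). One nit on your bookkeeping step: ``dropping'' a constant term from the XOR is only sound for a constant-$0$ term (dropping a constant-$1$ term flips the parity), but since every $\cM(f_i)$ takes the value $0$ at $\perp$, the only possibly ``constant'' summand is $M_{m_0}$ for the minimum element $m_0$ of $\cX$, which changes value at the first element of every chain and is therefore already covered by the argument of Lemma~\ref{Rep01} with $j_i=1$, so no dropping is needed.
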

We now define two classes of $d$-monotone functions.
\begin{enumerate}
    \item The class $d$-M \mn{$d$-M} is the class of $d$-monotone functions $f$ that are represented as $f=F(g_1,\ldots,g_d)$ where $F:\{0,1\}^d\to \{0,1\}$ is any Boolean function such that $F(0^d)=0$ and $g_1,g_2,\ldots,g_d:\cX\to \{0,1\}$ are any monotone functions. 
    \item The class $d$-M$(\oplus\cM)$ \mn{$d$-M$(\oplus\cM)$} is the class of $d$-monotone functions $f$ represented in the strict monotone representation $f=\cM(f_1)\oplus \cM(f_2)\oplus \cdots \oplus \cM(f_d)$.
\end{enumerate}
We define $\size(f)$ to be the minimum possible $\size(g_1)+\cdots+\size(g_d)$ of representations of $f=F(g_1,\ldots,g_d)$ in $d$-M. We define $\size_{\oplus\cM}(f)=\size(\cM(f_1))+\cdots+\size(\cM(f_d))$\mn{$\size_{\oplus\cM}(f)$}. 

Before proving the relationship between $\text{size}(f)$ and $\text{size}_{\oplus\mathcal{M}}(f)$, we present two lemmas that will be used to establish this relationship.

\begin{lemma}\label{fFgd22}
    Let $f=F(g_1,\ldots,g_d)$, where $g_i$ are monotone functions and $F(0^d)=0$. Then, for every $k$ 
    $$\bigcup_{k=1}^d \Min(\cM(f_{k}))\subseteq \bigcup_{I\subseteq [d]}\bigvee_{i\in I}\Min\left(g_i\right).$$
\end{lemma}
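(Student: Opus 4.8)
The plan is to prove $\bigcup_{k=1}^d \Min(\cM(f_k)) \subseteq \bigcup_{I\subseteq[d]}\bigvee_{i\in I}\Min(g_i)$ by induction on $k$, showing each $\Min(\cM(f_k))$ is contained in the right-hand union individually. The base case $k=1$ is essentially Lemma~\ref{BaseF}: since $\cM(f_1)=\cM(f)$, Lemma~\ref{Bshouty95-type} gives $\Min(\cM(f_1))=\Min(f_1)=\Min(f)$, and since $f$ is monotone-closed only through its minimal structure, every global minimal element of $f$ is also a local minimal element, so $\Min(f)\subseteq \min(f)$. Then Lemma~\ref{BaseF} applied to $f=F(g_1,\ldots,g_d)$ with $F(0^d)=0$ yields $\min(f)\subseteq \bigcup_{I\subseteq[d]}\bigvee_{i\in I}\Min(g_i)$, which settles $k=1$.

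For the inductive step, I would assume $\bigcup_{k'=1}^{k}\Min(\cM(f_{k'}))$ lies in the target union and prove it for $k+1$. The key observation is the recursion $f_{k+1}=f_k\oplus\cM(f_k)=\overline{f_k}\wedge\cM(f_k)$ together with the identity from Lemma~\ref{Fi} item~\ref{Fi2}, namely $f_{k+1}=\cM(f_{k+1})\oplus\cdots\oplus\cM(f_d)$. Since $\Min(\cM(f_{k+1}))=\Min(f_{k+1})$, I want to locate each global minimal element of $f_{k+1}$. The natural tool is Lemma~\ref{FTAlg}: I would try to apply it with $h$ chosen so that $\cM(h_i)=\cM(f_i)$ for $i\le k$ (taking $h$ to be the partial strict monotone sum $\cM(f_1)\oplus\cdots\oplus\cM(f_k)$, which is $k$-monotone and whose strict monotone representation is exactly $\cM(f_1),\ldots,\cM(f_k)$ by uniqueness), so that the hypothesis (\ref{Lasst}) is supplied by the induction hypothesis. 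Then $\min(f\oplus h)$ is governed by the target union, and I would relate $\Min(\cM(f_{k+1}))$ to $\min(f\oplus h)$.

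The main obstacle is bridging between $f\oplus h$ and $f_{k+1}$. Writing $h=\cM(f_1)\oplus\cdots\oplus\cM(f_k)$, we have $f\oplus h = \cM(f_{k+1})\oplus\cdots\oplus\cM(f_d)=f_{k+1}$ by Lemma~\ref{Fi} item~\ref{Fi2}. So $f\oplus h = f_{k+1}$ exactly, and therefore $\Min(\cM(f_{k+1}))=\Min(f_{k+1})=\Min(f\oplus h)\subseteq \min(f\oplus h)$. Thus Lemma~\ref{FTAlg}, whose hypothesis $\bigcup_{i=1}^d\Min(\cM(h_i))\subseteq\bigcup_{J}\bigvee_{j\in J}\Min(g_j)$ reduces (since $\cM(h_i)=\cM(f_i)$ for $i\le k$ and $\cM(h_i)=0$ for $i>k$) precisely to the induction hypothesis, delivers $\min(f\oplus h)\subseteq\bigcup_{I\subseteq[d]}\bigvee_{i\in I}\Min(g_i)$, and the inclusion for $k+1$ follows.

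The step requiring the most care is verifying that the strict monotone representation of the truncated $h=\cM(f_1)\oplus\cdots\oplus\cM(f_k)$ is genuinely $\cM(f_1),\ldots,\cM(f_k)$, i.e.\ that $\cM(h_i)=\cM(f_i)$ for $i\le k$ and vanishes afterward; this follows from Lemma~\ref{GGG}, since the $\cM(f_i)$ satisfy $\cM(f_{i+1})\Rightarrow\cM(f_i)$ (Lemma~\ref{Fi} item~\ref{Fi1}) and $\Min(\cM(f_{i+1}))\cap\Min(\cM(f_i))=\emptyset$ (Lemma~\ref{Fi} item~\ref{Fi3}). I expect the only real subtlety is checking that $h$ is $d$-monotone (indeed $k$-monotone, hence $d$-monotone) so that Lemma~\ref{FTAlg} applies verbatim, which is immediate from Lemma~\ref{Rep01} since $\cM(f_1)\oplus\cdots\oplus\cM(f_k)$ has the required XOR-of-monotone form with $k\le d$.
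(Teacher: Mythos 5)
Your proof is correct, but it takes a genuinely different route from the paper's. The paper proves Lemma~\ref{fFgd22} \emph{algorithmically}: it runs \textbf{Learn $d$-Monotone} on the target $f$, invokes Lemma~\ref{FiMhi} to conclude that the terminating hypothesis is exactly the strict monotone representation $\cM(f_1)\oplus\cdots\oplus\cM(f_d)$, and then notes that every minimal element of every $F_i'=\cM(f_i)$ produced by \textbf{Consistent} lies in $\cX_0\cup\cX_1$, which invariant~(\ref{LasstE1}) of the main theorem places inside $\bigcup_{I\subseteq[d]}\bigvee_{i\in I}\Min(g_i)$. You instead give a direct structural induction on $k$: the base case is Lemma~\ref{BaseF} via $\Min(\cM(f_1))=\Min(f)\subseteq\min(f)$, and the inductive step applies Lemma~\ref{FTAlg} to the truncated sum $h=\cM(f_1)\oplus\cdots\oplus\cM(f_k)$, using Lemma~\ref{GGG} together with items~\ref{Fi1} and~\ref{Fi3} of Lemma~\ref{Fi} to certify $\cM(h_i)=\cM(f_i)$ for $i\le k$ and $\cM(h_i)=0$ for $i>k$ (so that hypothesis~(\ref{Lasst}) reduces to the induction hypothesis), and item~\ref{Fi2} of Lemma~\ref{Fi} to identify $f\oplus h$ with $f_{k+1}$, whence $\Min(\cM(f_{k+1}))=\Min(f_{k+1})\subseteq\min(f\oplus h)$. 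All the pieces check out, including the point you flag as delicate ($h$ is $k$-monotone by Lemma~\ref{Rep01}, hence $d$-monotone). What your approach buys is independence from the learning algorithm: the paper's argument implicitly needs the algorithm's termination with $h\equiv f$ and the invariant at termination, whereas yours is a self-contained structural statement; what the paper's buys is brevity, being a one-paragraph corollary of machinery already built. One cosmetic remark: the result you cite for $\Min(\cM(f))=\Min(f)$ is the unnumbered lemma on the minimum monotone closure, not a labeled lemma, so the reference as written would not resolve.
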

\begin{proof} By Lemma~\ref{FiMhi}, every hypothesis $h=F_1\oplus\cdots\oplus F_d$ in the algorithm \textbf{Learn $d$-Monotone}~\ref{LearndM}  satisfies $F_i=\cM(h_i)$. Since the final hypothesis of the algorithm is $f$, the final output of the algorithm is $F_1'\oplus F_2'\oplus\cdots\oplus F_d'$ where $F_i'=\cM(f_i)$. In the procedure \textbf{Consistent}~\ref{Algo1n}, the minimal elements of all $F_i'=\cM(f_i)$ are from $\cX_0\cup\cX_1$, and by (\ref{LasstE1}), we have
\begin{eqnarray*}
        \cX_0\cup\cX_1\subseteq \bigcup_{I\subseteq [d]}\bigvee_{i\in I}\Min(g_i).
    \end{eqnarray*}\qed
\end{proof}

We now prove
\begin{lemma}\label{SMS}
    We have
    $$\size_{\oplus \cM}(f)\le \left(\frac{\size(f)}{d}+1\right)^{d}-1.$$
\end{lemma}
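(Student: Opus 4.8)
The plan is to bound $\size_{\oplus\cM}(f)=\sum_{i=1}^d\size(\cM(f_i))$ directly using the set-containment already established in Lemma~\ref{fFgd22}. Recall that $\size(\cM(f_i))=|\Min(\cM(f_i))|$, so summing over $i$ gives $\size_{\oplus\cM}(f)=\sum_{i=1}^d|\Min(\cM(f_i))|$. The crucial structural fact I would invoke is Lemma~\ref{Fi}, item~\ref{Fi3}: for $j>i$ the sets $\Min(\cM(f_i))$ and $\Min(\cM(f_j))$ are disjoint. Hence the union $\bigcup_{i=1}^d\Min(\cM(f_i))$ is a \emph{disjoint} union, and its cardinality equals exactly $\sum_{i=1}^d|\Min(\cM(f_i))|=\size_{\oplus\cM}(f)$. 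This is the key observation that turns the sum of sizes into a single cardinality of a set.

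Next I would apply Lemma~\ref{fFgd22}, which states
$$\bigcup_{i=1}^d\Min(\cM(f_i))\subseteq\bigcup_{I\subseteq[d]}\bigvee_{i\in I}\Min(g_i).$$
Taking cardinalities of both sides and using the disjointness just noted on the left gives
$$\size_{\oplus\cM}(f)=\left|\bigcup_{i=1}^d\Min(\cM(f_i))\right|\le\left|\bigcup_{I\subseteq[d]}\bigvee_{i\in I}\Min(g_i)\right|.$$
Here $f=F(g_1,\ldots,g_d)$ is taken to be a representation in $d$-M achieving $\size(f)=\size(g_1)+\cdots+\size(g_d)$ (after reducing to the case $F(0^d)=0$ as noted in the excerpt, which does not change any $\cM(f_i)$).

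Finally I would estimate the right-hand side by the same counting argument used in the proof of Theorem~\ref{THEO1}. Since $\bigvee_{i\in I}\Min(g_i)$ contains at most $\prod_{i\in I}|\Min(g_i)|=\prod_{i\in I}\size(g_i)$ elements, and the empty set $I=\emptyset$ contributes only the join of nothing (which I would either exclude, matching the ``$-1$'' in the bound, or handle as in Theorem~\ref{THEO1}), we get
$$\left|\bigcup_{I\subseteq[d]}\bigvee_{i\in I}\Min(g_i)\right|\le\prod_{i=1}^d(\size(g_i)+1)-1.$$
An application of the AM--GM inequality, exactly as in the theorem, bounds $\prod_{i=1}^d(\size(g_i)+1)$ by $\left(\frac{\size(g_1)+\cdots+\size(g_d)}{d}+1\right)^d=\left(\frac{\size(f)}{d}+1\right)^d$, yielding the claimed inequality $\size_{\oplus\cM}(f)\le\left(\frac{\size(f)}{d}+1\right)^d-1$.

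I expect the main obstacle to be the bookkeeping around the empty-set term and the reduction to $F(0^d)=0$. One must check that passing to $F'=F\oplus 1$ leaves the strict monotone representation (and hence $\size_{\oplus\cM}(f)$) unchanged, and that the ``$-1$'' in the bound is precisely accounted for by excluding $I=\emptyset$ from the union on the right. All the substantive work—disjointness of the $\Min(\cM(f_i))$, the containment in the join lattice, and the AM--GM step—is already available from Lemmas~\ref{Fi}, \ref{fFgd22}, and the proof of Theorem~\ref{THEO1}, so the argument is essentially assembling these pieces.
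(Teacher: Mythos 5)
Your proposal is correct and follows essentially the same route as the paper's proof: it combines the disjointness of the sets $\Min(\cM(f_i))$ from Lemma~\ref{Fi} with the containment of Lemma~\ref{fFgd22}, then bounds the cardinality of $\bigcup_{I\subseteq[d]}\bigvee_{i\in I}\Min(g_i)$ by $\prod_{i=1}^d(\size(g_i)+1)-1$ and applies AM--GM. The only cosmetic difference is your extra care about reducing to $F(0^d)=0$, which the paper sidesteps by building that condition into the definition of the class $d$-M.
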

\begin{proof}
    Let $f=F(g_1,\ldots,g_d)$ where $F:\{0,1\}^d\to \{0,1\}$ and $F(0^d)=0$. Suppose $s_i=\size(g_i)$. By Lemma~\ref{fFgd22}, we have
        $$\bigcup_{k=1}^d\Min(\cM(f_{k}))\subseteq \bigcup_{I\subseteq [d]}\bigvee_{i\in I}\Min\left(g_i\right).$$
    Therefore, by item~\ref{Fi3} in Lemma~\ref{Fi} and the AM-GM inequality,
    \begin{eqnarray*}      \size_{\oplus\cM}(f)&=& \sum_{k=1}^d\size(\cM(f_k))=\sum_{k=1}^d|\Min(\cM(f_k))|\\   &=&\left|\bigcup_{k=1}^d\Min(\cM(f_{k}))\right|
    \le \left|     \bigcup_{I\subseteq [d]}\bigvee_{i\in I}\Min\left(g_i\right)\right|\\
        &\le& \prod_{i=1}^d(\size(g_i)+1)-1  \le\left(\frac{\size(f)}{d}+1\right)^d-1.
    \end{eqnarray*}\qed
\end{proof}
We now show that this bound is tight.
\begin{lemma}
There is a $d$-monotone function $f$ such that $$\size_{\oplus \cM}(f)= \left(\frac{\size(f)}{d}+1\right)^{d}-1.$$
\end{lemma}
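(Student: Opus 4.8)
The plan is to construct an explicit $d$-monotone function $f$ that saturates the bound from Lemma~\ref{SMS}, and the natural candidate is a function whose representation $f=F(g_1,\ldots,g_d)$ has each $g_i$ of equal size $s/d$, chosen so that every join $\bigvee_{i\in I}\Min(g_i)$ produces \emph{distinct} minimal elements and none of these joins collapse or coincide across different index sets. Following the structure of the upper bound proof, equality in the final step requires equality in both the AM-GM inequality (forcing $\size(g_i)=s/d$ for all $i$) and in the containment $\bigcup_k \Min(\cM(f_k)) \subseteq \bigcup_{I}\bigvee_{i\in I}\Min(g_i)$, so the target function must make both of these tight simultaneously.

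First I would split the $n$ coordinates into $d$ disjoint blocks, and on block $i$ place $s/d$ single-variable (or small, pairwise-incomparable) monotone terms, so that $\Min(g_i)$ consists of $s/d$ elements supported on block $i$. Because the blocks are disjoint, any join $\bigvee_{i\in I} u_i$ with $u_i\in\Min(g_i)$ has support exactly $\bigcup_{i\in I}(\text{support of }u_i)$, and two such joins agree if and only if they come from the same $I$ and the same choice of $u_i$ in each block. This guarantees $\left|\bigcup_{I\subseteq[d]}\bigvee_{i\in I}\Min(g_i)\right| = \prod_{i=1}^d(s/d+1)-1 = (s/d+1)^d - 1$, matching the right-hand side exactly; the $-1$ accounts for the empty set $I=\emptyset$ contributing the all-zeros element (or $\perp$), which is excluded.

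Next I would choose the outer function $F$ so that the strict monotone representation $\mathcal{M}(f_1),\ldots,\mathcal{M}(f_d)$ actually realizes \emph{all} of these joins as genuine minimal elements across the various levels, i.e.\ so that the inclusion in Lemma~\ref{fFgd22} is an equality rather than a strict containment. A clean choice is $F(y_1,\ldots,y_d)=y_1\oplus y_2\oplus\cdots\oplus y_d$, or a threshold/parity-type function, which forces value changes along every chain that crosses the various $g_i$-boundaries; I would verify directly that for this $f$ every element of $\bigcup_I\bigvee_{i\in I}\Min(g_i)$ lands in some $\Min(\mathcal{M}(f_k))$, using the explicit recursion $f_{i+1}=\overline{f_i}\wedge\mathcal{M}(f_i)$ from Lemma~\ref{MaimM} together with the disjointness established in item~\ref{Fi3} of Lemma~\ref{Fi}.

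The main obstacle will be the final verification step: showing that the candidate $F$ and the block structure together force the strict monotone representation to use \emph{every} join, so that $\size_{\oplus\cM}(f)=\sum_k|\Min(\mathcal{M}(f_k))|$ attains the full count rather than something smaller. This amounts to computing the levels $\mathcal{M}(f_k)$ explicitly for the chosen $f$ and checking that their minimal-element sets partition $\bigcup_I\bigvee_{i\in I}\Min(g_i)$ with no cancellation in the XOR and no join lost to the monotone closure. I expect this to be the delicate part, since a poorly chosen $F$ could cause distinct joins to share a value of $f$ along chains and thereby reduce the number of distinct minimal elements appearing across the $d$ monotone layers; the disjoint-block construction is precisely what prevents such collapses, and I would lean on it heavily to push the inclusion of Lemma~\ref{fFgd22} to equality.
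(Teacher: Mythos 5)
Your construction is exactly the paper's: $d$ disjoint blocks of $t=\size(f)/d$ variables each, $g_i=y_i$ a disjunction of the variables in block $i$, and $F$ the parity function, so $f=y_1\oplus\cdots\oplus y_d$. The counting argument for why the joins $\bigvee_{i\in I}\Min(g_i)$ are all distinct (disjoint supports) is also the paper's implicit reasoning. However, you stop at the step you yourself flag as delicate --- showing that every such join actually appears as a minimal element of some $\cM(f_k)$ --- and this is the only part of the proof that requires an argument, so as written the proposal is incomplete.

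The paper closes this gap cleanly and without any chain-by-chain verification: it observes the identity $y_1\oplus\cdots\oplus y_d=G_1\oplus G_2\oplus\cdots\oplus G_d$, where $G_k=\bigvee_{1\le i_1<\cdots<i_k\le d}\bigl(\bigwedge_{j=1}^k y_{i_j}\bigr)$ is the $k$-th ``elementary symmetric'' disjunction of the $y_i$ (valid because if exactly $\ell$ of the $y_i$ are $1$ then $G_1=\cdots=G_\ell=1$ and $G_{\ell+1}=\cdots=G_d=0$, so both sides equal $\ell\bmod 2$). Since $G_d\Rightarrow\cdots\Rightarrow G_1$ and $\Min(G_i)\cap\Min(G_{i+1})=\emptyset$, Lemma~\ref{GGG} gives $G_k=\cM(f_k)$ directly, and then $\size_{\oplus\cM}(f)=\sum_{k=1}^d\binom{d}{k}t^k=(t+1)^d-1$ by the binomial theorem. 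I would recommend you adopt this route rather than trying to unwind the recursion $f_{i+1}=\overline{f_i}\wedge\cM(f_i)$ by hand: Lemma~\ref{GGG} was proved precisely so that one can certify a strict monotone representation by exhibiting any decreasing XOR decomposition with disjoint minimal-element sets, which is what the $G_k$ provide here for free.
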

\begin{proof}
    Consider the function 
    $f=y_1\oplus \cdots\oplus y_d$ where $y_i=x_{i,1}\vee\cdots\vee x_{i,t}$ where $t=n/d$. The size of~$f$ is $d(n/d)=n$.

    First
    $$y_1\oplus\cdots\oplus y_d= G_1\oplus G_2\oplus \cdots\oplus G_d$$
    where
    $$G_k=\bigvee_{1\le i_1<i_2<\cdots<i_k\le d}\left(\bigwedge_{j=1}^k y_{i_j}\right).$$
    This is because if $\ell$ of the functions $y_i$ are equal to $1$ then $G_1=G_2=\cdots=G_\ell=1$ and $G_{\ell+1}=\cdots=G_d=0$. 

    Since $G_d\Rightarrow G_{d-1}\Rightarrow \cdots\Rightarrow G_1$ and $\Min(G_i)\cap \Min(G_{i+1})=\emptyset$, by Lemma~\ref{GGG}, we have $G_i=\cM(f_i)$. Now
    \begin{eqnarray*}
      \size_{\oplus\cM}(f)&=&\size(G_1)+\size(G_2)+\cdots+\size(G_d)\\
      &=&dt+\binom{d}{2}t^2+\cdots+\binom{d}{d}t^d\\
      &=&(t+1)^d-1=\left(\frac{\size(f)}{d}+1\right)^{d}-1.
    \end{eqnarray*}\qed
\end{proof}
\bibliographystyle{splncs04}
\bibliography{dMonotone}

\begin{thebibliography}{10}
\providecommand{\url}[1]{\texttt{#1}}
\providecommand{\urlprefix}{URL }
\providecommand{\doi}[1]{https://doi.org/#1}

\bibitem{AmanoM06}
Amano, K., Maruoka, A.: On learning monotone boolean functions under the uniform distribution. Theor. Comput. Sci.  \textbf{350}(1),  3--12 (2006). \doi{10.1016/J.TCS.2005.10.012}, \url{https://doi.org/10.1016/j.tcs.2005.10.012}

\bibitem{Angluin87}
Angluin, D.: Queries and concept learning. Mach. Learn.  \textbf{2}(4),  319--342 (1987). \doi{10.1007/BF00116828}, \url{https://doi.org/10.1007/BF00116828}

\bibitem{Black}
Black, H.: Nearly optimal bounds for sample-based testing and learning of {\textdollar}k{\textdollar}-monotone functions. CoRR  \textbf{abs/2310.12375} (2023). \doi{10.48550/ARXIV.2310.12375}, \url{https://doi.org/10.48550/arXiv.2310.12375}

\bibitem{BlaisCOST15}
Blais, E., Canonne, C.L., Oliveira, I.C., Servedio, R.A., Tan, L.: Learning circuits with few negations. In: Garg, N., Jansen, K., Rao, A., Rolim, J.D.P. (eds.) Approximation, Randomization, and Combinatorial Optimization. Algorithms and Techniques, {APPROX/RANDOM} 2015, August 24-26, 2015, Princeton, NJ, {USA}. LIPIcs, vol.~40, pp. 512--527. Schloss Dagstuhl - Leibniz-Zentrum f{\"{u}}r Informatik (2015). \doi{10.4230/LIPICS.APPROX-RANDOM.2015.512}, \url{https://doi.org/10.4230/LIPIcs.APPROX-RANDOM.2015.512}

\bibitem{BlumBL98}
Blum, A., Burch, C., Langford, J.: On learning monotone boolean functions. In: 39th Annual Symposium on Foundations of Computer Science, {FOCS} '98, November 8-11, 1998, Palo Alto, California, {USA}. pp. 408--415. {IEEE} Computer Society (1998). \doi{10.1109/SFCS.1998.743491}, \url{https://doi.org/10.1109/SFCS.1998.743491}

\bibitem{Bshouty95}
Bshouty, N.H.: Exact learning boolean function via the monotone theory. Inf. Comput.  \textbf{123}(1),  146--153 (1995). \doi{10.1006/INCO.1995.1164}, \url{https://doi.org/10.1006/inco.1995.1164}

\bibitem{Bshouty97}
Bshouty, N.H.: Simple learning algorithms using divide and conquer. Comput. Complex.  \textbf{6}(2),  174--194 (1997). \doi{10.1007/BF01262930}, \url{https://doi.org/10.1007/BF01262930}

\bibitem{BshoutyT96}
Bshouty, N.H., Tamon, C.: On the fourier spectrum of monotone functions. J. {ACM}  \textbf{43}(4),  747--770 (1996). \doi{10.1145/234533.234564}, \url{https://doi.org/10.1145/234533.234564}

\bibitem{GoldreichGLRS00}
Goldreich, O., Goldwasser, S., Lehman, E., Ron, D., Samorodnitsky, A.: Testing monotonicity. Comb.  \textbf{20}(3),  301--337 (2000). \doi{10.1007/S004930070011}, \url{https://doi.org/10.1007/s004930070011}

\bibitem{GuijarroLR01}
Guijarro, D., Lav{\'{\i}}n, V., Raghavan, V.: Monotone term decision lists. Theor. Comput. Sci.  \textbf{259}(1-2),  549--575 (2001). \doi{10.1016/S0304-3975(00)00043-8}, \url{https://doi.org/10.1016/S0304-3975(00)00043-8}

\bibitem{HarmsY22}
Harms, N., Yoshida, Y.: Downsampling for testing and learning in product distributions. In: Bojanczyk, M., Merelli, E., Woodruff, D.P. (eds.) 49th International Colloquium on Automata, Languages, and Programming, {ICALP} 2022, July 4-8, 2022, Paris, France. LIPIcs, vol.~229, pp. 71:1--71:19. Schloss Dagstuhl - Leibniz-Zentrum f{\"{u}}r Informatik (2022). \doi{10.4230/LIPICS.ICALP.2022.71}, \url{https://doi.org/10.4230/LIPIcs.ICALP.2022.71}

\bibitem{LangeRV22}
Lange, J., Rubinfeld, R., Vasilyan, A.: Properly learning monotone functions via local correction. In: 63rd {IEEE} Annual Symposium on Foundations of Computer Science, {FOCS} 2022, Denver, CO, USA, October 31 - November 3, 2022. pp. 75--86. {IEEE} (2022). \doi{10.1109/FOCS54457.2022.00015}, \url{https://doi.org/10.1109/FOCS54457.2022.00015}

\bibitem{LangeV23}
Lange, J., Vasilyan, A.: Agnostic proper learning of monotone functions: beyond the black-box correction barrier. In: 64th {IEEE} Annual Symposium on Foundations of Computer Science, {FOCS} 2023, Santa Cruz, CA, USA, November 6-9, 2023. pp. 1149--1170. {IEEE} (2023). \doi{10.1109/FOCS57990.2023.00068}, \url{https://doi.org/10.1109/FOCS57990.2023.00068}

\bibitem{Markov58}
Markov, A.A.: On the inversion complexity of a system of functions. J. {ACM}  \textbf{5}(4),  331--334 (1958). \doi{10.1145/320941.320945}, \url{https://doi.org/10.1145/320941.320945}

\bibitem{ODonnellS07}
O'Donnell, R., Servedio, R.A.: Learning monotone decision trees in polynomial time. {SIAM} J. Comput.  \textbf{37}(3),  827--844 (2007). \doi{10.1137/060669309}, \url{https://doi.org/10.1137/060669309}

\bibitem{ODonnellW09}
O'Donnell, R., Wimmer, K.: Kkl, kruskal-katona, and monotone nets. In: 50th Annual {IEEE} Symposium on Foundations of Computer Science, {FOCS} 2009, October 25-27, 2009, Atlanta, Georgia, {USA}. pp. 725--734. {IEEE} Computer Society (2009). \doi{10.1109/FOCS.2009.78}, \url{https://doi.org/10.1109/FOCS.2009.78}

\bibitem{Servedio04}
Servedio, R.A.: On learning monotone {DNF} under product distributions. Inf. Comput.  \textbf{193}(1),  57--74 (2004). \doi{10.1016/J.IC.2004.04.003}, \url{https://doi.org/10.1016/j.ic.2004.04.003}

\bibitem{TakimotoSM00}
Takimoto, E., Sakai, Y., Maruoka, A.: The learnability of exclusive-or expansions based on monotone {DNF} formulas. Theor. Comput. Sci.  \textbf{241}(1-2),  37--50 (2000). \doi{10.1016/S0304-3975(99)00265-0}, \url{https://doi.org/10.1016/S0304-3975(99)00265-0}

\end{thebibliography}

\newpage
\appendix
\section{Another Representation}
In~\cite{TakimotoSM00} (page 16), Takimoto et al. claim that if $f=g_1\oplus g_2\oplus\cdots\oplus g_d$, where each $g_i$ is monotone, and for every $i\le d-1$, $g_{i+1}\not=g_i$, and $g_{i+1}\Rightarrow g_i$, then $g_i=\cM(f_i)$. In this appendix, we show in Lemma~\ref{L19} that this claim is not entirely accurate. See also~\cite{GuijarroLR01} page~560. We show that there exists a function $f=g_1\oplus g_2\oplus\cdots\oplus g_d$ of size $s$, where each $g_i$ is monotone, and for every $i\le d-1$, $g_{i+1}\not=g_i$, and $g_{i+1}\Rightarrow g_i$, that satisfies
$$\size_{\oplus \cM}(f)=\Omega\left(\left(\frac{2s}{d^2}\right)^d\right).$$
This, in particular, implies that Takimoto et al.'s claim is not true.

We define $d$-M$(\oplus\cI)$ to be the class of all the $d$-monotone functions $f=g_1\oplus\cdots\oplus g_d$ where $g_d\Rightarrow g_{d-1}\Rightarrow\cdots\Rightarrow g_1$ and $g_{i+1}\not=g_i$ for all $i\le d-1$. Recall the class 
M$(\oplus\cM)$ of all the $d$-monotone functions with strict monotone representations.

We define
$\size_{\oplus\cI}(f)$ to be the minimum possible $\size(g_1)+\cdots+\size(g_d)$ of such representations. 

Throughout this appendix, the lattice is $\{0,1\}^n$ with the standard $\le$.

We first start with the following lemma.
\begin{lemma}\label{MintermsGener}
    Let $f:\{0,1\}^n\to \{0,1\}$ be a Boolean function such that $f(0^n)=0$ and there is $x^{(1)}<x^{(2)}<\cdots<x^{(m)}$ where $f(x^{(i)})=i\mod 2$, $x^{(i)}$ is an immediate predecessor of $x^{(i+1)}$, and\footnote{For $x\in \{0,1\}^n$, $\wt(x)$ denotes the Hamming weight of $x$, i.e., the number of ones in $x$.} $\wt(x^{(i)})=i$. Then $x^{(i)}$ is a minimal element of $\cM(f_i)$.
\end{lemma}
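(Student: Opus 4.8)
The plan is to reduce the claim to two self-contained inductions on $i$. First, recall the basic identity $\Min(\cM(f_i))=\Min(f_i)$ for the monotone closure, so it suffices to prove the two requirements for global minimality of $x^{(i)}$ with respect to $f_i$: namely $f_i(x^{(i)})=1$, and $f_i(y)=0$ for every $y<x^{(i)}$. I would establish these two requirements independently.

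The main obstacle — and the place where the weight hypothesis $\wt(x^{(i)})=i$ is essential — is the second requirement: the strict downset of $x^{(i)}$ contains many elements off the chain, where the original $f$ is completely arbitrary, so one cannot handle them one at a time. The observation that dissolves this difficulty is a uniform vanishing statement: for every $i$ and every $z$ with $\wt(z)<i$ one has $f_i(z)=0$. I would prove this by induction on $i$ using the recursion $f_{i+1}=f_i\oplus\cM(f_i)$. In the base case $f_1=f$, weight $<1$ forces $z=0^n$ and $f(0^n)=0$ by hypothesis. For the step, take $z$ with $\wt(z)\le i$ and split on its weight: if $\wt(z)<i$ then $f_i(z)=0$ by the inductive hypothesis, and every $w\le z$ also has $\wt(w)<i$, so $\cM(f_i)(z)=0$; if $\wt(z)=i$ then every $w<z$ has $\wt(w)<i$ and contributes nothing to the closure, whence $\cM(f_i)(z)=f_i(z)$. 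In both cases $f_{i+1}(z)=f_i(z)\oplus\cM(f_i)(z)=0$. Granting this vanishing lemma, the second requirement is immediate: any $y<x^{(i)}$ satisfies $\wt(y)<\wt(x^{(i)})=i$, hence $f_i(y)=0$.

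For the first requirement I would instead track the values of $f_i$ along the chain itself, proving by induction on $i$ that $f_i(x^{(j)})=(j-i+1)\bmod 2$ for all $j\ge i$; specializing to $j=i$ yields $f_i(x^{(i)})=1$. The base case is $f_1(x^{(j)})=f(x^{(j)})=j\bmod 2$. For the inductive step I would use that $x^{(i)}\le x^{(j)}$ together with $f_i(x^{(i)})=1$ (the case $j=i$ of the hypothesis) to conclude $\cM(f_i)(x^{(j)})=1$ for every $j\ge i+1$, so that $f_{i+1}(x^{(j)})=f_i(x^{(j)})\oplus 1$ shifts the parity exactly as claimed. Combining the two inductions gives $f_i(x^{(i)})=1$ and $f_i(y)=0$ for all $y<x^{(i)}$, i.e. $x^{(i)}\in\Min(f_i)=\Min(\cM(f_i))$, which is the assertion. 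Note that the two inductions are genuinely independent: the vanishing lemma never uses the chain, and the chain-profile induction never uses the vanishing lemma, so neither argument circularly depends on the other.
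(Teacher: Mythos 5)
Your proof is correct and follows essentially the same route as the paper's: the paper's single induction likewise combines the fact that $f_k$ vanishes on all points of weight less than $k$ with the alternating parity of $f_k$ along the chain (via $\cM(f_{k-1})(x^{(j)})=1$ for $j\ge k-1$), concluding that $x^{(k)}$ is a global minimal element of $f_k$ and hence of $\cM(f_k)$. Your version merely separates these two ingredients into independent inductions and spells out the weight argument more explicitly than the paper does.
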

\begin{proof}
    Since $f(0^n)=0$, every element $a$ of weight $1$ that satisfies $f_1(a)=f(a)=1$ (including $x^{(1)}$) is in $\Min(f_1^{-1}(1))$ and therefore is a minimal element of $\cM(f_1)$, and $\cM(f_1)(a)=1$. Consider $f_2=f_1\oplus \cM(f_1)$. 
    Then $f_2(0^n)=0$ and $f_2$ is zero in every element of weight $1$. Since $x^{(i)}\ge x^{(1)}$, we have $\cM(f_1)(x_i)=1$ and therefore $f_2(x^{(i)})=f_1(x_i)\oplus \cM(f_1)(x_i)=((i+1)\mod 2)$. Then every element $a$ of weight $2$ that satisfies $f_2(a)=1$ (including $x^{(2)}$) is a minimal element of $\cM(f_2)$. By induction, the result follows. \qed
\end{proof}

\begin{lemma}\label{L19}
There exist a $d$-monotone function $f$ such that $$\size_{\oplus \cM}(f)=\Omega\left(\left(\frac{2\cdot \size_{\oplus \cI}(f)}{d^2}\right)^d\right).$$
\end{lemma}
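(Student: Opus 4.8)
The plan is to exhibit a single explicit function $f$ together with one cheap $\oplus\cI$-representation (which upper-bounds $\size_{\oplus\cI}(f)$), whose \emph{forced} strict monotone representation is exponentially large, and to lower-bound the latter by counting minimal elements through Lemma~\ref{MintermsGener}. Partition the $n=dt$ variables into $d$ groups of size $t$, set $y_j=\bigvee_{k=1}^{t}x_{j,k}$, and take the suffix-OR functions $g_i=y_i\vee y_{i+1}\vee\cdots\vee y_d$. These are monotone, nested ($g_{i+1}\Rightarrow g_i$) and strictly decreasing ($g_i\supsetneq g_{i+1}$ because $y_i$ uses fresh variables), so $f=g_1\oplus\cdots\oplus g_d$ is a legitimate member of $d$-M$(\oplus\cI)$. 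Since $g_i$ is an OR of $(d-i+1)t$ distinct variables, $\size(g_i)=(d-i+1)t$, so this representation has total size $s=\sum_{i=1}^d(d-i+1)t=t\,d(d+1)/2$, giving $\size_{\oplus\cI}(f)\le s$ and $2s/d^2=t(1+1/d)$.

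I would then identify $f$ explicitly. Writing $p(x)=\max\{j:y_j(x)=1\}$ (with $p=0$ when no group is active), the nesting gives $g_i(x)=1$ iff $p(x)\ge i$, so exactly $p(x)$ of the $g_i$ fire and $f(x)=p(x)\bmod 2$: $f$ is the parity of the largest active group index. In particular $f(0^n)=0$, as required by Lemma~\ref{MintermsGener}.

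The heart of the argument is the lower bound on $\size_{\oplus\cM}(f)=\sum_{i=1}^d|\Min(\cM(f_i))|$. For each tuple $(k_1,\dots,k_d)\in[t]^d$ I would build the chain $x^{(1)}<x^{(2)}<\cdots<x^{(d)}$, where $x^{(i)}$ sets exactly the variables $x_{1,k_1},\dots,x_{i,k_i}$ to $1$. Then $\wt(x^{(i)})=i$, $x^{(i)}$ is an immediate predecessor of $x^{(i+1)}$, and $p(x^{(i)})=i$, so $f(x^{(i)})=i\bmod 2$. Lemma~\ref{MintermsGener} then certifies $x^{(d)}\in\Min(\cM(f_d))$. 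Distinct tuples give distinct points, so $|\Min(\cM(f_d))|\ge t^d$ and hence $\size_{\oplus\cM}(f)\ge t^d$.

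Finally I would combine the two bounds: since $\size_{\oplus\cI}(f)\le s$ we have $(2\size_{\oplus\cI}(f)/d^2)^d\le(2s/d^2)^d=t^d(1+1/d)^d\le e\,t^d$, so $\size_{\oplus\cM}(f)\ge t^d\ge e^{-1}(2\size_{\oplus\cI}(f)/d^2)^d$, which is exactly $\Omega((2\size_{\oplus\cI}(f)/d^2)^d)$. The main conceptual obstacle is the design step: one must choose $g_i$ that share enough minterms to keep the $\oplus\cI$-size only quadratic in $d$ (here the suffix-ORs reuse all of groups $i+1,\dots,d$), while still producing $t^d$ independent alternating chains that blow up the strict monotone representation. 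Once the suffix-OR construction is in place, verifying the chain hypotheses of Lemma~\ref{MintermsGener} for all $t^d$ tuples and checking the arithmetic are routine; as a byproduct the same $f$ shows $g_i\neq\cM(f_i)$, refuting the claim of Takimoto et al.
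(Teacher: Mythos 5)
Your proposal is correct and follows essentially the same route as the paper: the same suffix-OR construction $g_i=y_i\vee\cdots\vee y_d$ over $d$ groups of $t$ fresh variables, the same $t^d$ alternating chains fed into Lemma~\ref{MintermsGener} to certify $t^d$ minimal elements of $\cM(f_d)$, and the same size accounting $\size_{\oplus\cI}(f)\le t\,d(d+1)/2$. Your explicit identification of $f$ as the parity of the largest active group index and the $(1+1/d)^d\le e$ step just make explicit what the paper leaves implicit.
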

\begin{proof}
    Consider the function $$f=(y_1\vee y_2\vee \cdots\vee y_d)\oplus (y_2\vee y_3\cdots \vee y_{d})\oplus \cdots\oplus (y_{d-1}\vee y_d)\oplus y_d,$$
    where $y_i=x_{i,1}\vee x_{i,2}\vee \cdots\vee x_{i,t}$ where $t=2n/(d(d+1))$. The number of variables in $f$ and the size of $f$ is $n$. 

    We will now use Lemma~\ref{MintermsGener}. For every $(1,j_1),(2,j_2),\ldots,(d,j_d)$ where $j_i\in [t]$ for all $i\in [d]$, consider the elements $x^{(1)}<\cdots<x^{(d)}$ where $x^{(\ell)}$ has $1$ in entries $(1,j_1),(2,j_2),\ldots,(\ell,j_\ell)$ and $0$ in the other entries. Then $x^{(i)}$, $i\in [d]$, satisfies the conditions in Lemma~\ref{MintermsGener}. Therefore, $x^{(d)}$ is a minimal element of $\cM(f_d)$. The number of such elements is $t^d=\Omega((2\cdot \size_{\oplus \cI}(f)/d^2)^d)$.\qed
\end{proof}

We note here that if we choose $y_i$ to be a disjunction of $n/(id)$ variables, then we get a slightly better lower bound $\sim (e\cdot\size_{\oplus\cI}(f)/d^2)^d$.

We now show.
\begin{lemma}
    We have
    $$\size_{\oplus \cM}(f)\le \left(\frac{\size_{\oplus \cI}(f)}{d}+1\right)^d.$$
\end{lemma}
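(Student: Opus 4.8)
The plan is to reduce this to the already-proved Lemma~\ref{SMS}, which states that $\size_{\oplus\cM}(f)\le (\size(f)/d+1)^d-1$, by showing that $\size(f)\le \size_{\oplus\cI}(f)$. The key observation is simply that the class $d$-M$(\oplus\cI)$ is a \emph{special case} of the class $d$-M: every representation witnessing membership in $d$-M$(\oplus\cI)$ is also a representation witnessing membership in $d$-M, using the \emph{same} monotone functions $g_1,\ldots,g_d$.

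First I would spell out this containment. Suppose $f=g_1\oplus g_2\oplus\cdots\oplus g_d$ is a representation in $d$-M$(\oplus\cI)$, so the $g_i$ are monotone with $g_d\Rightarrow\cdots\Rightarrow g_1$ and $g_{i+1}\neq g_i$. Set $F(z_1,\ldots,z_d)=z_1\oplus z_2\oplus\cdots\oplus z_d$. Then $f=F(g_1,\ldots,g_d)$ and $F(0^d)=0$, so this is a legitimate representation of $f$ in the class $d$-M, with the same component sizes $\size(g_1),\ldots,\size(g_d)$. Hence the set of attainable values of $\size(g_1)+\cdots+\size(g_d)$ over $d$-M$(\oplus\cI)$ representations is contained in the corresponding set over $d$-M representations, and since $\size(f)$ (resp. $\size_{\oplus\cI}(f)$) is by definition the minimum of that sum over the larger (resp. smaller) family, we get
$$\size(f)\le \size_{\oplus\cI}(f).$$

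With this inequality in hand, the rest is immediate. Since $x\mapsto (x/d+1)^d$ is increasing for $x\ge 0$, Lemma~\ref{SMS} gives
$$\size_{\oplus\cM}(f)\le \left(\frac{\size(f)}{d}+1\right)^d-1\le \left(\frac{\size_{\oplus\cI}(f)}{d}+1\right)^d-1\le \left(\frac{\size_{\oplus\cI}(f)}{d}+1\right)^d,$$
which is the claimed bound.

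I do not anticipate any genuine obstacle here; the entire content is the structural remark that fixing $F$ to be XOR turns an $\oplus\cI$-representation into a valid $d$-M representation of the same cost, so that the minimization defining $\size(f)$ ranges over a superset. The only point requiring care is to verify that the definitions align exactly—in particular that $F=\mathrm{XOR}$ satisfies the normalization $F(0^d)=0$ required by the class $d$-M, and that the ordering/distinctness constraints $g_d\Rightarrow\cdots\Rightarrow g_1$, $g_{i+1}\neq g_i$ imposed in $d$-M$(\oplus\cI)$ are merely additional restrictions that do not prevent the representation from qualifying as a $d$-M representation. Both checks are routine.
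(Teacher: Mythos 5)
Your proof is correct and follows exactly the paper's route: the paper likewise observes that $\size(f)\le \size_{\oplus\cI}(f)$ (since an $\oplus\cI$-representation is a $d$-M representation with $F=\mathrm{XOR}$) and then invokes Lemma~\ref{SMS}. You merely spell out the containment and the monotonicity step that the paper leaves implicit.
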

\begin{proof}
Obviously, $\size(f)\le \size_{\oplus\cI}(f)$. Now the result follows from Lemma~\ref{SMS}.\qed  
\end{proof}
\end{document}